\documentclass[letterpaper, 10 pt, conference]{ieeeconf}  % Comment this line out
\IEEEoverridecommandlockouts                              % This command is only
\let\labelindent\relax

\usepackage{settings/preamble}
\let\barL\L

\newcommand{\eqd}{\mathrel{\overset{\mathrm{d}}{=}}}

\newcommand{\D}{\mathcal{D}}
\renewcommand{\F}{\mathscr{F}}
\renewcommand{\N}{\mathcal{N}}
\renewcommand{\L}{\mathcal{L}}
\renewcommand{\S}{\mathcal{S}}
\renewcommand{\H}{\mathcal{H}}

\newcommand{\X}{\mathcal{X}}

\newcommand{\dTV}{d_{\mathrm{TV}}}
\newcommand{\dKL}{d_{\mathrm{KL}}}
\newcommand{\Vol}{\mathrm{Vol}}

\newcommand{\hL}{\widehat{\mathcal{L}}}
\newcommand{\tL}{\widetilde{\mathcal{L}}}

\newcommand{\eps}{\varepsilon}

\newcommand{\dx}{d_x}
\newcommand{\dxi}{d_{\xi}}
\newcommand{\fs}{f^{\star}}
\newcommand{\xs}{x^{\star}}

\newcommand{\pistar}{\pi^{\star}}

\newcommand{\hq}{\hat{q}}

\newcommand{\hsc}{\hat{s}}

\newcommand{\ssc}{s^{\star}}
\newcommand{\hssc}{\hat{s}^{\star}}
\newcommand{\tssc}{\tilde{s}^{\star}}

\newcommand{\stheta}{\theta^{\star}}
\newcommand{\hstheta}{\hat{\theta}^{\star}}
\newcommand{\tstheta}{\tilde{\theta}^{\star}}

\newcommand{\score}{\mathrm{sc}}

\newcommand{\deltatr}{\delta_{\mathrm{tr}}}

\renewcommand{\vec}{\mathsf{vec}}

\NewDocumentCommand{\hE}{o m}{%
  \IfNoValueTF{#1}{\widehat{\mathbb{E}} \left[ #2 \right]}{\widehat{\mathbb{E}}_{#1} \left[ #2 \right]}%
}

\newcommand{\approptoinn}[2]{\mathrel{\vcenter{%
  \offinterlineskip\halign{\hfil$##$\cr%
    #1\propto\cr\noalign{\kern1pt}#1\sim\cr\noalign{\kern-1pt}}}}%
}
\newcommand{\appropto}{\mathpalette\approptoinn\relax}

\makeatletter
\let\raw@overline\overline
\renewcommand{\overline}[1]{\mathrlap{\raw@overline{\phantom{#1}}}#1}
\let\raw@underline\underline
\renewcommand{\underline}[1]{\mathrlap{\raw@underline{\phantom{#1\!}}}#1}%A negative space is added to avoid overlapping of lines and subscripts.
\makeatother

\usepackage{environ}

\NewEnviron{equationfit}[1][0.9\linewidth]{%
\begin{equation}
  \resizebox{#1}{!}{\(
    \BODY
  \)}
\end{equation}
}
\NewEnviron{equationfit*}[1][0.95\linewidth]{%
\begin{equation*}
  \resizebox{#1}{!}{\(
    \BODY
  \)}
\end{equation*}
}
\NewEnviron{alignfit}[1][0.9\linewidth]{%
\begin{equation*}
  \resizebox{#1}{!}{\(
  \begin{aligned}
    \BODY
  \end{aligned}
  \)}
\end{equation*}
}
\usepackage{cuted}

\title{\LARGE \bf D-IMPL: A Diffusion-based Solver for Parameterized BBOs}

\author{Yang Hu and Na Li% <-this % stops a space
\thanks{This work is supported by NSF CBET 2112085 (AI institute), NSF ECCS 2401390, ONR N000142512173.}% <-this % stops a space
\thanks{Yang Hu and Na Li are both with
        School of Engineering and Applied Sciences,
        Harvard University, Boston, MA
        (email: {\tt\small yanghu@g.harvard.}
        {\tt\small edu} (correspondence), {\tt\small nali@seas.harvard.edu}).}%
}

\begin{document}
  % Math formula settings.
  \setlength{\abovedisplayskip}{3pt}
  \setlength{\abovedisplayshortskip}{2pt}
  \setlength{\belowdisplayskip}{3pt}
  \setlength{\belowdisplayshortskip}{2pt}
  \setlength{\jot}{1pt}  
  \setlength{\floatsep}{1ex}
  \setlength{\textfloatsep}{1ex}
  \allowdisplaybreaks[4]

  \maketitle
  \thispagestyle{empty}
  \pagestyle{empty}

  % Sections
  \begin{abstract}
  Diffusion models have demonstrated strong power in generative modeling tasks across multiple domains, exhibiting a remarkable capability of learning complex distributions from samples. In this paper, we leverage such capability to design an efficient universal diffusion-based solver for parameterized black-box optimizations (BBO), where the optimizer has only \emph{black-box} access to queries of the objective function at the learning stage, yet is able to reduce the additional computational cost at the inference stage for each BBO instance while also capturing the potential \emph{multi-modal} landscape of non-convex objectives. To cast our formulation as a compatible generative modeling task, we introduce the notion of \emph{minimization policy} as a new solution concept, which defines a sampling distribution over the solutions that should concentrate around the minimizer set for each BBO instance. We then propose Diffusion-based Iterative Minimization Policy Learning (D-IMPL), a practical generative-model-based solver for solving parameterized BBOs that employs diffusion models to learn a minimization policy, whose density is proportional to the exponential of the negated objective values, thereby amortizing the computational costs across different BBO instances. Furthermore, we demonstrate the performance of our D-IMPL algorithm by establishing a sample complexity guarantee showing that a $\bm{\delta}$-approximate minimization policy can be effectively learned within $\bm{O(\log(1/\delta))}$ iterations, and by extensive empirical evaluations over a range of constrained and unconstrained BBO tasks.
\end{abstract}
  \section{Introduction}\label{sec:1-introduction}

Diffusion models, especially denoising diffusion probabilistic model (DDPM) and its variants, have recently emerged as a powerful class of generative models for learning complex high-dimensional data distributions \cite{ho2020denoising, lipman2022flow}. By formulating generative modeling tasks as a stochastic denoising process and casting the training process as a score function estimation problem, these models have demonstrated remarkable expressive power and state-of-the-art performance across a wide range of tasks, including text generation \cite{li2022diffusion}, image synthesis \cite{rombach2022high, balaji2022ediff}, audio/video synthesis \cite{ho2022video, bansal2023universal}, molecule generation \cite{jing2022torsional}, and other structured generative modeling problems. A key strength of diffusion models lies in their capability to accurately approximate multi-modal data distributions and generate diversified samples that capture intricate structures in data. This capability, combined with their flexibility in conditional generation, has inspired their applications beyond traditional generative modeling tasks.

Building upon these advances, recent works have begun to explore the use of diffusion models for solving optimization problems \cite{li2024diffusion}. From a generative modeling perspective, optimization can be translated as learning to sample from a distribution that concentrates around the minimizer set. Diffusion models are well-suited for this paradigm due to their ability to represent complex, multi-modal distributions and to generate diverse candidate solutions. Hence this perspective is especially appealing in settings where the landscape is highly non-convex or admits multiple minimizers.

We consider black-box optimization (BBO), a classical category of optimization problems where the explicit analytical form of the objective function is unavailable, so that the optimizer can only access function values via queries at designated inputs. This formulation is well-motivated in practical settings, as real-world objectives are often evaluated through complex experimental or simulation-based processes, making closed-form formulas hard to obtain. Consequently, BBO has been widely studied across a wide range of domains, including power systems, air transportation, material science and drug discovery \cite{alarie2021two}. The primary challenges in BBO arise from the potential non-differentiability and non-convexity of the objective functions, as well as the limited function value queries allowed as the evaluation of the objective function could be expensive and/or time-consuming.

In this work, we introduce Diffusion-based Iterative Minimization Policy Learning (D-IMPL), a DDPM-based framework for solving parameterized BBO families by learning \emph{minimization policies}, a conditional sampling distribution over minimizers. Concretely, for a parameterized objective function $f(x; \xi)$, where the parameter $\xi$ encodes the configuration of the BBO instance, we train a generative model conditioned on $\xi$ such that the resulting minimization policy $\pi(\cdot | \xi)$ has density proportional to $\exp\prn[\big]{ -\varLambda f(x;\xi) }$, where $\varLambda$ is a scaling parameter. Owing to the expressiveness and inherently multi-modal nature of diffusion models, the learned policy is capable of capturing the entire minimizer set without collapsing to any single solution. Furthermore, by simultaneously training on data collected across different parameters, the framework amortizes computational cost and enables generalization to unseen configurations of $\xi$ with minimal additional optimization overhead, and thereby fundamentally differs from both classical instance-wise optimization methods and existing amortized approaches that typically yield a single solution.

The main contributions of this paper are two-fold: (1) we propose a new generative-modeling perspective for solving parameterized BBOs, adopting minimization policies as a new solution concept, and (2) we develop D-IMPL, a learning-based algorithm that leverages DDPM to iteratively update the minimization policy to concentrate around the minimizer set, which exhibits three practically favorable properties:
\begin{itemize}
  \item it assumes only \emph{black-box} access to queries of the objective function values, thereby enabling it to handle non-differentiable and non-convex objectives;
  \item as a \emph{learning-based} approach, once properly trained in the learning stage, it is immediately able to produce solutions for different BBO instances at the inference stage with minimal additional overhead, thereby amortizing computational costs across instances;
  \item it produces a stochastic minimization policy $\pi(\cdot | \xi)$ that, given sufficient data coverage, concentrates around the \emph{entire} minimizer set, which is in stark contrast to classical BBO methods (instance-wise solutions) and the recent amortized-BBO methods (deterministic mapping from $\xi$ to only one of the minimizers), and is therefore better at capturing complex, \emph{multi-modal} landscapes.
\end{itemize}
The performance of D-IMPL is supported both by a theoretical sample complexity guarantee (\Cref{thm:main_complexity-D_IMPL}) and by empirical results across a diverse range of objective function families.

The remaining paper is organized as follows: we first formulate the problem and our new solution concept of minimization policies in \Cref{sec:2-settings}, along with a brief introduction to the DDPM model. Then, in \Cref{sec:3-IMPL}, we build our D-IMPL algorithm in a step-to-step manner, highlighting the key iterative policy update design. Finally, we evaluate the performance of D-IMPL by a theoretical sample complexity analysis (see \Cref{sec:4-analysis}) and simulation results (see \Cref{sec:5-simulation}). Due to page limits, the complete proofs can be found in \cite{hu2026draft} on arXiv.

\subsection{Related Work}

Recent work has begun exploring the use of diffusion models for optimization tasks. For instance, \cite{hoseinpour2025diffopf} proposes a physics-guided diffusion approach for solving the optimal power flow (OPF) problem, which is inherently nonlinear and non-convex. Another line of research investigates diffusion-based model predictive control (MPC), designing methods to handle high-dimensional, constrained, and non-convex optimization problems \cite{zhou2024diffusion, huang2024toward, romer2024diffusion}. Although these approaches demonstrate significant improvements over existing learning-based techniques, they are typically tailored to certain applications rather than addressing generic optimization problems.

The proposed method is also closely related in spirit to the literature on amortized optimization methods \cite{amos2023tutorial}, as both consider parameterized families of objective functions. Nevertheless, whereas existing amortized optimization methods aim to learn a deterministic mapping from parameters to a single optimizer, our formulation generalizes the solution concept to minimization policies that learn a distribution concentrating around the entire minimizer set. As discussed previously, such a formulation fits better with non-convex landscapes where the minimizer set may be multi-modal, continuous, or even forming a lower-dimensional manifold structure.

Notably, the works most closely related to ours are \cite{li2024diffusion} and \cite{krishnamoorthy2023diffusion}. However, both approaches operate in a per-instance manner, requiring retraining of the diffusion-based sampler for each new objective function. Additionally, they focus on offline settings in which only a fixed dataset is available, albeit under slightly different assumptions---in \cite{li2024diffusion}, the data may include noisy evaluations or preference-based comparisons, whereas in \cite{krishnamoorthy2023diffusion}, the dataset consists of exact point-value pairs for a single non-parameterized BBO problem. On the contrary, our work considers parameterized families of BBO problems and seeks to learn a universal optimization policy online that is applicable across the entire family, thereby eliminating the efforts needed for retraining on new instances.
  \section{Settings}\label{sec:2-settings}

In this paper, we consider a generic constrained parameterized BBO setting, where we allow both the objective and the constraint set to be parameterized. Formally, consider the following \emph{parameterized} family of \emph{constrained} optimizations:
\begin{equation*}
  \min_{x \in \X(\xi)}\; f(x; \xi),
\end{equation*}
where $\xi \in \varXi \subset \R^{\dxi}$ is the \emph{parameter}, and $x \in \R^{\dx}$ is the \emph{decision variable}; $f: \R^{\dx} \times \varXi \to \R$ denotes a family of \emph{objective functions} parameterized by $\xi$, which are allowed to be non-convex in $x$; $\X: \varXi \to \mathcal{P}(\R^{\dx})$ denotes a family of constraint sets, also parameterized by $\xi$. We highlight that the above formulation features sufficient flexibility by enabling parameterization of both the objective function $f(\cdot; \xi)$ and the constraint set $\X(\xi)$. %thereby enhancing the general applicability and practical utility of our solutions.

The objective of this paper is to design a \emph{learning-based} algorithm that simultaneously solves for minimizers for the entire parameterized family of optimization problems; in other words, we would like to learn a universal solver in one shot, which, at inference time, outputs the minimizer(s) with respect to any input parameter $\xi \in \varXi$ \emph{at minimum additional computation costs}. The rationale behind our objective will soon be elaborated in the following subsection.

To facilitate the theoretical analysis, we make the following assumptions to ensure a well-behaved problem formulation.

\begin{assumption}[Compact Constraint Set]\label{assum:2-1-minimizers}
  For each $\xi \in \varXi$, the constraint set $\X(\xi)$ is a compact subset of $\R^{\dx}$. Further assume that there exists a positive constant $C_x$, such that $\X(\xi) \subseteq \mathbb{B}(0, C_x)$ for all $\xi \in \varXi$.
\end{assumption}

\begin{assumption}[Lipschitz Objective Function]\label{assum:2-2-Lipschitz}
  For each $\xi \in \varXi$, the objective function $f(\cdot; \xi)$ is $K$-Lipschitz continuous in $x$, i.e., $\norm{f(x'; \xi) - f(x; \xi)} \leq K \norm{x' - x}$, $\forall x, x' \in \R^{\dx}$.
\end{assumption}

\subsection{Minimization Policies: A New Solution Concept}\label{sec:settings-formulation}

To ensure the formulation compatible with diffusion models, and to account for multi-modal solutions, we propose to add stochasticity into the solution concept. In particular, instead of a deterministic solution mapping, we would like to learn a \emph{minimization policy} $\pi: \varXi \to \Delta\prn[\big]{\X(\xi)}$ that ``concentrates well'' around the minimizer set $L_0(\xi)$; formally, for any error level $\delta \in (0,1)$ and any accuracy level $\alpha > 0$, we expect to learn a minimization policy $\pi$ such that
\begin{equation*}
  \Prob[x \sim \pi(\cdot | \xi)]{f(x; \xi) \leq \fs(\xi) + \alpha} > 1 - \delta,\quad
  \forall \xi \in \varXi.
\end{equation*}
We may also write the requirement in equivalent form as
\begin{equation*}
  \pi\prn[\big]{L_{\alpha}(\xi) \big| \xi} := \Prob[x \sim \pi(\cdot | \xi)]{x \in L_{\alpha}(\xi)} > 1-\delta,\quad
  \forall \xi \in \varXi.
\end{equation*}
We highlight that the minimization policy proposed here can be viewed as a natural generalization of the classical uni-modal solution concept, where the latter can be viewed as a point-mass distribution. Moreover, the new solution concept is also compatible with the distributional nature of diffusion models so that the minimization policies can be effectively implemented by them. Therefore, the research question can now be formulated as follows: \textit{can we leverage diffusion models to provably efficiently learn a minimization policy that concentrates well around the minimizer set?}

As briefly discussed in the introduction, by employing diffusion models to learn minimization policies, we could obtain a \emph{universal, learning-based} solver for a parameterized family of constrained optimizations using only \emph{black-box} information. Here by ``universal, learning-based method'' we mean our optimizer is first trained on data at the learning stage, and then, at the inference stage, directly outputs the queried minimizers at \emph{minimal additional overhead}; by ``black-box information'' we mean we only require access to function value queries at the learning stage.

\vspace{-2mm}
\begin{table}[H]
    \centering
    \begin{tabular}{c|c|c|c}
      \specialrule{1.5pt}{0pt}{0pt}
      \textbf{Method} & \textbf{Queries} & \textbf{Stage} & \textbf{Solution} \\
      \specialrule{0.4pt}{0pt}{0pt}
      gradient-based methods & gradient & inference & point \\
      classical BBO methods & value & inference & point \\
      \textbf{diffusion-based method} & \textbf{value} & \textbf{training} & \textbf{sampler} \\
      \specialrule{1.5pt}{0pt}{0pt}
    \end{tabular}
    \caption{A comparison of different optimization methods.}\label{tab:3-function_access_comparison}
\end{table}
\vspace{-4mm}

\subsection{Auxiliary Definitions}

For clarity and conciseness, we introduce the following auxiliary notations. Define the \emph{minimum function} of $f(\cdot; \cdot)$ as
\begin{equation*}
  \fs(\xi) := \min_{x \in \X(\xi)}\; f(x; \xi),\quad \forall \xi \in \varXi,
\end{equation*}
which maps each parameter $\xi$ to the minimum value of $f(\cdot; \xi)$. Further, for $\alpha \geq 0$, define the \emph{$\alpha$-level set} of $f(\cdot; \xi)$ as
\begin{equation*}
  L_{\alpha}(\xi) := \set{x \in \X(\xi) \mid f(x; \xi) \leq \fs(\xi) + \alpha},\quad \forall \xi \in \varXi.
\end{equation*}
In particular, $L_0(\xi)$ denotes the \emph{minimizer set} of $f(\cdot; \xi)$, i.e.,
\begin{equation*}
  L_0(\xi) := \arg\min_{x \in \X(\xi)}\; f(x; \xi),\quad \forall \xi \in \varXi.
\end{equation*}
We point out that $\alpha$-level sets provide a natural characterization of the concentration properties of sampler distributions around the minimizer set for highly non-convex objectives.

\subsection{Denoising Diffusion Probabilistic Models (DDPM)}

Denoising diffusion probabilistic models are a prominent family of diffusion models that are widely-used in practice \cite{ho2020denoising}. On a high level, given a target distribution $z_0 \sim q_0$, DDPM simulates a $K$-step \emph{forward} process to gradually corrupt $z_0$ into a unit Gaussian noise $z_K \sim \N(\bm{0}, I)$, and then designs a corresponding \emph{backward} process to restore $z_0 \sim q_0$ from $z_K$. More specifically, given a noise schedule $\set{\beta_k}_{k=1}^{K}$, the distribution of the forward process trajectories is
\begin{equation*}
  q(z_{1:K} | z_0) = q_0(z_0) \textstyle\prod_{k=1}^{K} q_k(z_k | z_{k-1}),
\end{equation*}
where the corruption kernel $q_k(z_k | z_{k-1})$ is
\begin{equation*}
  q_k(z_k | z_{k-1}) \eqd \N\prn[\big]{ z_k; \sqrt{1-\beta_k} z_{k-1}, \beta_k I }.
\end{equation*}
Since the forward process corrupts data with independent Gaussian noises, by additivity we have the marginal
\begin{equation*}
  q(z_k | z_0) \eqd \N\prn[\big]{ z_k; \sqrt{\bar{\alpha}_k} z_0, (1-\bar{\alpha}_k) I },
\end{equation*}
where we define $\alpha_k := 1 - \beta_k$ and $\bar{\alpha}_k := \prod_{s=0}^{k} \alpha_s$. To reverse the forward process, first note an important fact that the forward process can be viewed as a discretized diffusion SDE, and that SDEs can always be reversed using reverse-time Brownian motions  \cite{anderson1982reverse}. Therefore, the backward process is also Gaussian, so that we may apply the restoration kernel
\begin{equation*}
  p^{\theta}(z_{k-1} | z_k) \eqd \N\prn[\big]{ \mu^{\theta}(z_k, k), \varSigma^{\theta}(z_k, k) }.
\end{equation*}
DDPM uses constant isotropic covariance $\varSigma^{\theta}(z_k, k) \equiv \tilde{\sigma}_k^2 I$, and the following parameterization of the mean:
\begin{equation*}
  \mu^{\theta}(z_k, k) = \frac{1}{\sqrt{\alpha_k}} \prn*{z_k - \frac{\beta_k}{\sqrt{1 - \bar{\alpha}_k}} \omega^{\theta}(z_k, k)}.
\end{equation*}
Here $\omega^{\theta}(\cdot, \cdot)$ is a neural network predicting the noise from observed $z_k$ at the $k$\tsup{th} step, which is trained using the loss
\begin{equation*}
  \L_k^{\mathrm{DDPM}}(\theta) := \E[z_0 \sim q_0, \omega_k \sim \N(\bm{0}, I)]{\norm[\big]{ \omega_k - \omega^{\theta}(z_k, k) }^2}.
\end{equation*}
We point out that the above noise-predicting formulation of DDPM is closely related to score matching. Indeed, since the corruption kernel is Gaussian, the score function $\ssc_k(z) := \nabla_{z} \log q_k(z | z_0)$ can be calculated as
\begin{equation}
  \ssc_k(z) = \frac{-\omega_k}{\sigma_k} = \frac{\sqrt{\bar{\alpha}_k} z_0 - z_k}{\sigma_k^2},
\end{equation}
where $\sigma_k := \sqrt{1 - \bar{\alpha}_k}$ is the covariance of the noise term.

In the following sections, we will stick to the noise schedule $\set{\beta_k}_{k=1}^{K}$ induced by $\bar{\alpha}_k := \exp(-2t_k)$, with constants $0 < t_0 < \cdots < t_K$, the resulting process of which is known as the \emph{discretized Ornstein-Unlenbeck process} \cite{oksendal2003stochastic} that is favored for theoretical analysis.
  \section{Algorithm Design}\label{sec:3-IMPL}

In this section, we present Diffusion-based Iterative Minimization Policy Learning (D-IMPL), our algorithm solving parameterized BBO families that learns a desired minimization policy using DDPM.

\subsection{Minimization Policy Design} 

The key to our algorithm framework is how to define a proper ground-truth minimization policy $\pistar(\cdot | \xi)$. Conceptually, the density of the policy $\pistar(x | \xi)$ should always be non-negative, but also have a steep negative correlation with the function values $f(x; \xi)$ to place most densities around the minimizer set. Therefore, a natural choice would be
\begin{equation}\label{eq:3-1-sampling_policy_ideal}
  \pistar(x | \xi) \propto \pi_0(x | \xi) \exp\prn[\big]{ -\varLambda f(x; \xi) }.
\end{equation}
To ensure good concentration around the minimizer set, $\varLambda$ has to be sufficiently large, as shown in the following lemma.

\begin{lemma}[Concentration of Exact Policy Update]\label{thm:3-1-sampling_policy_ideal}
  Under \Cref{assum:2-1-minimizers} and \ref{assum:2-2-Lipschitz}, let $\pistar(\cdot | \xi)$ be a conditional distribution such that its density $\pistar(x | \xi) \propto \pi_0(x | \xi) \cdot \exp\prn[\big]{ -\varLambda f(x; \xi) }$, where $\pi_0(\cdot | \xi)$ denotes the uniform distribution $\mathsf{Unif}\prn[\big]{ \X(\xi) }$. Then, given any accuracy level $\alpha > 0$ and any error level $\delta \in (0, 1)$, for sufficiently large $\varLambda$ such that
  \begin{equation*}
    \varLambda > \frac{2}{\alpha} \prn*{ \dx \log\prn*{\frac{2}{\alpha}} + \log\prn*{\frac{1}{C_0 \delta}} },
  \end{equation*}
  where the constant $C_0$ is defined in \Cref{thm:2-2-level_set_lemma}, we have
  \begin{equation}\label{eq:3-1-concentration_ideal}
    \pistar\prn[\big]{ L_{\alpha}(\xi) \big| \xi } \geq 1 - \delta,\quad \forall \xi \in \varXi.
  \end{equation}
\end{lemma}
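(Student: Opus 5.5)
The plan is to compare the $\pistar$-mass of the complement $\X(\xi)\setminus L_{\alpha}(\xi)$ with the mass of a smaller level set $L_{\alpha/2}(\xi)$. The uniform normalizing constants cancel, so only Lebesgue volumes matter. We use the level set lemma (\Cref{thm:2-2-level_set_lemma}), which I take to give a lower bound of the form $\Vol\prn[\big]{L_{\varepsilon}(\xi)} \geq C_0 \, \varepsilon^{\dx} \Vol\prn[\big]{\X(\xi)}$ for $\varepsilon \in (0,1]$, uniformly in $\xi$. Such a bound comes from \Cref{assum:2-2-Lipschitz}, since the ball of radius $\varepsilon/K$ around a minimizer, intersected with $\X(\xi)$, lies in $L_{\varepsilon}(\xi)$; \Cref{assum:2-1-minimizers} bounds $\Vol(\X(\xi))$ uniformly. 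Any $K$, $C_x$ and regularity constants are absorbed into $C_0$.

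Fix $\xi$ and write $g(x) := f(x;\xi) - \fs(\xi) \geq 0$ on $\X(\xi)$. Then $\pistar(x|\xi) = e^{-\varLambda g(x)} / Z$ with $Z = \int_{\X(\xi)} e^{-\varLambda g}\,dx$. The main steps are as follows.

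\emph{Upper bound on the bad set.} On $\X(\xi)\setminus L_{\alpha}(\xi)$ we have $g > \alpha$, so
\begin{equation*}
\int_{\X(\xi)\setminus L_{\alpha}(\xi)} e^{-\varLambda g}\,dx \leq e^{-\varLambda\alpha}\,\Vol\prn[\big]{\X(\xi)}.
\end{equation*}

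\emph{Lower bound on $Z$.} Restricting to $L_{\alpha/2}(\xi)$, where $g \leq \alpha/2$, gives
\begin{equation*}
Z \geq e^{-\varLambda\alpha/2}\,\Vol\prn[\big]{L_{\alpha/2}(\xi)} \geq e^{-\varLambda\alpha/2}\, C_0 (\alpha/2)^{\dx}\,\Vol\prn[\big]{\X(\xi)}.
\end{equation*}

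\emph{Ratio.} Dividing the first bound by the second,
\begin{equation*}
1-\pistar\prn[\big]{L_{\alpha}(\xi)\big|\xi} \leq \frac{e^{-\varLambda\alpha/2}}{C_0}\prn*{\frac{2}{\alpha}}^{\dx}.
\end{equation*}
This is at most $\delta$ exactly when
\begin{equation*}
\varLambda\alpha/2 \geq \dx\log(2/\alpha) + \log\prn[\big]{1/(C_0\delta)},
\end{equation*}
which is the stated threshold on $\varLambda$. The factor $2/\alpha$ in the threshold confirms that the intermediate level $\alpha/2$ is the right choice. Since $C_0$ does not depend on $\xi$, the bound holds for all $\xi\in\varXi$.

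The main obstacle is the volume lower bound, i.e.\ matching the precise form of \Cref{thm:2-2-level_set_lemma}. A ball around a minimizer near the boundary of $\X(\xi)$ may be mostly cut off by the constraint set, so some uniform regularity (e.g.\ an interior-cone or fatness condition) is needed; I assume this is already encoded in $C_0$. If instead the lemma states the bound relative to $\Vol(\X(\xi))$ or in absolute terms, the same computation goes through after adjusting where $\Vol(\X(\xi))$ cancels. If the lemma only covers $\varepsilon \leq 1$, I would treat the case $\alpha > 2$ by monotonicity, replacing $\alpha/2$ with $\min(\alpha/2,1)$.
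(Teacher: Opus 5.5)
Your proposal is correct and follows the paper's proof essentially step for step. You bound $e^{-\varLambda f}$ above by $e^{-\varLambda(\fs(\xi)+\alpha)}$ on $\X(\xi)\setminus L_{\alpha}(\xi)$ and below by $e^{-\varLambda(\fs(\xi)+\alpha/2)}$ on $L_{\alpha/2}(\xi)$, apply the level set lemma at level $\alpha/2$ (which the paper states in exactly your relative form, $\pi_0(L_{\alpha}(\xi)|\xi) \geq C_0\alpha^{\dx}$), and solve for $\varLambda$. The boundary caveat you raise is well founded---the paper's proof of that lemma asserts $\mathbb{B}(\xs,\alpha/K) \subseteq L_{\alpha}(\xi)$ without ensuring the ball lies in $\X(\xi)$---but it concerns the cited lemma rather than the argument for this statement.
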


\begin{proof}
  See \Cref{sec:apdx-proof_idea_update} of \cite{hu2026draft}.
\end{proof}

However, when the constant $\varLambda$ is large, it is evident that the exponential density of $\pistar$ is numerically unfavorable in actual implementations. In particular, for diffusion-based implementations of the minimization policy, our theoretical analysis will reveal later that the quality of the learned minimization policy deteriorates when $\varLambda$ is large (see \Cref{thm:main_complexity-D_IMPL} for details). Therefore, despite its theoretical conciseness, it is impractical to directly implement the one-step scheme \eqref{eq:3-1-sampling_policy_ideal}.

\subsection{Iterative Minimization Policy Learning (IMPL)}

A natural modification of the one-step scheme is to resort to iterative updates, resulting in a multi-step scheme. To be precise, the algorithm iteratively learns a sequence $\set{\pi_h}_{h=0}^{H}$ of policies, where $\pi_0(x | \xi)$ shall be selected as $\mathsf{Unif}\prn[\big]{\X(\xi)}$ for better data coverage. Ideally, to recover the ground-truth policy $\pistar_H$ defined in \eqref{eq:3-1-sampling_policy_ideal} with $\varLambda = \lambda H$, we can perform a sequence of iterative updates in the form
\begin{equation*}
  \pi_h(x | \xi) \propto \pi_{h-1}(x | \xi) \cdot \exp\prn[\big]{-\lambda f(x;\xi)},\quad \forall h \in [H],
\end{equation*}
to fully recover $\pi_H = \pistar_H$, which we will refer to as the \emph{exact} iterative updates. We point out that each exact update step from $\pi_{h-1}$ to $\pi_h$ can be viewed as a mirror descent with Kullback-Leibler (KL) proximal divergence term, i.e. solving the following optimization problem:
\begin{equation*}
  \pi_h(x | \xi) = \arg\min_{\pi} f(x; \xi) + \tfrac{1}{\lambda} \dKL\prn[\big]{ \pi(\cdot | \xi) \big\Vert \pi_{h-1}(\cdot | \xi) }.
\end{equation*}
Details of the equivalence can be found in \cite{beck2003mirror}. 

In practical implementations, however, each iterative update step is subject to some approximation error, and the step-wise error will propagate and even be magnified into later updates. Therefore, for practical considerations, we will analyze the following \emph{practical} iterative update rule instead:
\begin{equation}\label{eq:3-2-sampling_policy_iterative}
  \pi_h(x | \xi) \appropto \pi_{h-1}(x | \xi) \cdot \exp\prn[\big]{ -\lambda f(x; \xi) },\quad
  \forall h \in [H].
\end{equation}
Here the symbol ``$\appropto$'' reads ``approximately proportional to''. Precisely speaking, for a sequence of learning error thresholds $\set{\eps_h}_{h=1}^{H}$, define the auxiliary distribution
\begin{equation*}
  \nu_h(x | \xi) := \frac{ \pi_{h-1}(x | \xi) \cdot \exp\prn[\big]{ -\lambda f(x; \xi) } }{ \int_{x} \pi_{h-1}(x | \xi) \cdot \exp\prn[\big]{ -\lambda f(x; \xi) } \diff x }
\end{equation*}
as the outcome of the exact update from $\pi_{h-1}$. Then \eqref{eq:3-2-sampling_policy_iterative} is equivalently to requiring $\pi_h$ to be learned to the extent
\begin{equation*}
  \dTV\prn[\big]{ \pi_h(\cdot | \xi), \nu_h(\cdot | \xi) } < \eps_h,\quad
  \forall \xi \in \varXi.
\end{equation*}
In this way, we obtain the following schematic iterative update algorithm framework for learning the minimization policy.

\begin{algorithm}[t]
    \caption{Iterative Minimization Policy Learning (IMPL)}\label{alg:3-IMPL}
    \begin{algorithmic}[1]
      \Require Learning error scheme $\set{\eps_h}_{h=1}^{H}$.
      \State Initialize the policy $\pi_0(\cdot | \xi) \gets \mathsf{Unif}\prn[\big]{\X(\xi)}$.
      \For{$h=1,2,\ldots,H$}
        \State Update $\pi_h(x | \xi) \appropto \pi_{h-1}(x | \xi) \cdot \exp\prn[\big]{ -\lambda f(x; \xi) }$ using generative models up to an error $\eps_h$.
      \EndFor
    \end{algorithmic}
\end{algorithm}

It is intuitive that, when each $\eps_h$ is sufficiently small, we should be able to come up with a similar concentration result as \eqref{eq:3-1-concentration_ideal}. This is reflected in the following theorem.

\begin{theorem}[Concentration of IMPL]\label{thm:3-2-sampling_policy_iterative}
  Under \Cref{assum:2-1-minimizers} and \ref{assum:2-2-Lipschitz}, suppose we learn a sequence of policies $\set{\pi_h}_{h=1}^{H}$ with learning error thresholds $\set{\eps_h}_{h=1}^{H}$ by \Cref{alg:3-IMPL}. Then for any $\alpha > 0$ and sufficiently large $H \in \Z_+$ such that
  \begin{equation*}
    H > \frac{2}{\lambda \alpha} \prn*{ \dx \log\prn*{\frac{2}{\alpha}} + \log\prn*{\frac{2}{C_0 \delta}} },
  \end{equation*}
  where the universal constant $C_0$ is defined in \Cref{thm:2-2-level_set_lemma}, and for sufficiently small learning error thresholds
  \begin{equation*}
    \eps_h < \frac{C_0 \cdot \alpha^{\dx}}{8H} \exp\prn[\big]{ -\lambda\alpha (H-h) } \delta,\quad \forall h \in [H],
  \end{equation*}
  we have, for any $0 < \delta < \min\brac[\big]{ 1, 4H\prn[\big]{ 1 - \exp(-\lambda\alpha) } }$,
  \begin{equation}
    \pi_H\prn[\big]{ L_{\alpha}(\xi) \big| \xi } > 1 - \delta,\quad
    \forall \xi \in \varXi.
  \end{equation}
\end{theorem}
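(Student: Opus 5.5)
The plan is to compare the learned $\pi_H$ with the exact iterate $\pistar_H \propto \pi_0 \exp(-\lambda H f)$ and control the TV error accumulated over the $H$ steps. A naive telescoping bound $\dTV(\pi_H,\pistar_H)\le\sum_h(\text{amplification})\cdot\eps_h$ is not what we will use, because the reweighting map $\mu\mapsto \mu e^{-\lambda f}/Z$ is not TV-nonexpansive. Its amplification factor can be as large as $1/\mu(L_\alpha)\cdot e^{\lambda\alpha}$-type quantities. The thresholds $\eps_h\propto \exp(-\lambda\alpha(H-h))$ suggest the intended argument. The mass of an error injected at step $h$ on the complement of $L_\alpha$ is multiplied by at most $e^{-\lambda(H-h)(\fs+\alpha)}$ in the unnormalized weights, while the good set keeps weight at least $e^{-\lambda(H-h)(\fs+\alpha)}$ relative to its own mass. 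The normalization therefore costs a factor of at most $e^{\lambda\alpha(H-h)}$ divided by the good mass.

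\textbf{Step 1 (a one-step propagation inequality).} Work with unnormalized measures. Write $\pi_h=\nu_h+e_h$, where $e_h$ is a signed measure with $\|e_h\|_{TV}\le 2\eps_h$. Applying the remaining $H-h$ reweightings to $e_h$ gives a signed measure whose mass on $L_\alpha^c$ is bounded in absolute value by $2\eps_h e^{-\lambda(H-h)(\fs+\alpha)}$ times the normalizer chain. We then unroll: define $\tilde\pi_H:=\pi_0 e^{-\lambda H f}\prod_h(\text{corrections})$. Concretely, we will prove by induction on $h$ a lower bound on the ratio $\pi_h(L_{\alpha/2}\text{-type good set})/\pi_h(L_\alpha^c)$.

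\textbf{Step 2 (the ratio recursion).} Let $G:=L_{\alpha/2}(\xi)$ and $B:=L_\alpha(\xi)^c$. Under the exact update, $\nu_h(B)/\nu_h(G)\le e^{-\lambda\alpha/2}\,\pi_{h-1}(B)/\pi_{h-1}(G)$. The $\eps_h$ perturbation then changes $\pi_h(B)$ by at most $\eps_h$ and $\pi_h(G)$ by at least $-\eps_h$. The step I expect to be the main obstacle is keeping $\pi_h(G)$ bounded below uniformly. A cleaner route is to track only $\pi_h(B)$ relative to the reference weight $w_h:=\pi_0(G)e^{-\lambda h(\fs+\alpha/2)}/Z_h$. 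By \Cref{thm:2-2-level_set_lemma}, $\pi_0(G)\ge C_0(\alpha/2)^{\dx}$, which is what produces the factor $C_0\alpha^{\dx}$ in the threshold. Summing the injected errors, each discounted by $e^{-\lambda\alpha(H-h)}$ relative to the good mass up to the $e^{\lambda\alpha(H-h)}$ normalization slack, the choice of $\eps_h$ makes the total contribution at most $\sum_h \delta/(8H)\cdot(\text{const})\le\delta/2$. The side condition $\delta<4H(1-e^{-\lambda\alpha})$ is presumably what keeps the accumulated good-set loss $\sum_h\eps_h$ small relative to $\pi_h(G)$ through the geometric sum $\sum_h e^{-\lambda\alpha(H-h)}\le 1/(1-e^{-\lambda\alpha})$. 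I will use it exactly there, to guarantee $\pi_h(G)\ge\frac12\nu_h(G)$ along the induction.

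\textbf{Step 3 (combine with the exact bound).} The error-free part is handled by \Cref{thm:3-1-sampling_policy_ideal} with $\varLambda=\lambda H$ and $\delta$ replaced by $\delta/2$. The stated lower bound on $H$ is exactly that lemma's condition with $\log(2/(C_0\delta))$, so it yields $\pistar_H(B)\le\delta/2$. Adding the error contribution $<\delta/2$ from Step 2 gives $\pi_H(L_\alpha(\xi)\mid\xi)>1-\delta$ for every $\xi$, since all constants are uniform in $\xi$ by \Cref{assum:2-1-minimizers} and \ref{assum:2-2-Lipschitz}.
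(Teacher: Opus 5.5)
Your outer frame matches the paper: comparing with $\pistar_H$ and applying \Cref{thm:3-1-sampling_policy_ideal} with $\varLambda=\lambda H$ at level $\delta/2$ is exactly where the stated condition on $H$ comes from. The gap is Step~2, which carries the whole theorem. It is only gestured at: $Z_h$, the ``normalizer chain'', the ``corrections'' and the ``(const)'' are never pinned down. Moreover, the mechanism you do commit to does not close with the stated constants.

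First, with $G=L_{\alpha/2}(\xi)$, the only available lower bound on the good mass is $\pi_0(G)\ge C_0(\alpha/2)^{\dx}$. This is a factor $2^{\dx}$ below the $C_0\alpha^{\dx}$ to which the thresholds and the side condition are calibrated. The side condition yields only $\sum_h\eps_h<C_0\alpha^{\dx}/2$, which does not give $\sum_{\ell<h}\eps_\ell\le\tfrac12\pi_0(G)$. So the induction hypothesis $\pi_h(G)\ge\tfrac12\nu_h(G)$ is not secured. Even granting $\nu_h(G)\ge\tfrac12\pi_0(G)$, the thresholds only give $\eps_h/\nu_h(G)<2^{\dx}\delta e^{-\lambda\alpha(H-h)}/(4H)$, so the injected errors are controlled only at the level $2^{\dx}\delta/4$, not $\delta/2$.

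Second, with $r_h=\pi_h(B)/\pi_h(G)$, your recursion $r_h\le (e^{-\lambda\alpha/2}r_{h-1}+\eps_h/\nu_h(G))/(1-\eps_h/\nu_h(G))$ perturbs the error-free term \emph{multiplicatively}, by $\prod_h(1-\eps_h/\nu_h(G))^{-1}>1$. The condition on $H$ makes the exact term just below $\delta/2$ with no slack. So Step~3's ``$\delta/2$ plus $\delta/2$'' is not what Step~2 produces.

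The missing idea is the paper's interpolation between $\pistar_H$ and $\pi_H$ by the hybrid measures $\rho_h\propto\pi_h\,e^{-\lambda(H-h)f}$: learned through step $h$, exact reweighting afterwards. Then $\rho_0=\pistar_H$, $\rho_H=\pi_H$, and $\pi_H(L_\alpha)=\pistar_H(L_\alpha)+\sum_{h=1}^H(\rho_h(L_\alpha)-\rho_{h-1}(L_\alpha))$ is additive by construction. Because $\rho_{h-1}\propto\nu_h\,e^{-\lambda(H-h)f}$, each increment compares the same reweighting of $\pi_h$ and of $\nu_h$. The error is weighted by at most $e^{-\lambda(H-h)\fs}$, while the normalizer is at least $e^{-\lambda(H-h)(\fs+\alpha)}\nu_h(L_\alpha)$. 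This gives $|\rho_h(L_\alpha)-\rho_{h-1}(L_\alpha)|\le 2e^{\lambda\alpha(H-h)}\eps_h/\nu_h(L_\alpha)$.

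Note that the factor $e^{\lambda\alpha(H-h)}$ comes from error mass \emph{inside} $L_\alpha$, where it perturbs the normalizer. Error mass on $L_\alpha^c$, which your heuristic singles out, is weighted no more than the good set and costs no exponential factor.

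The good mass is then controlled on the full level set. Reweighting by $e^{-\lambda f}$ can only increase the mass of $L_\alpha$, so $\nu_h(L_\alpha)\ge\pi_{h-1}(L_\alpha)\ge\nu_{h-1}(L_\alpha)-\eps_{h-1}$. Hence $\nu_h(L_\alpha)\ge\pi_0(L_\alpha)-\sum_{\ell<h}\eps_\ell\ge\tfrac12 C_0\alpha^{\dx}$, and this is exactly where the side condition on $\delta$ is used. The error sum is then at most $\frac{4}{C_0\alpha^{\dx}}\sum_h e^{\lambda\alpha(H-h)}\eps_h<\delta/2$. The $\alpha/2$ gap, with its $2^{\dx}$ loss, enters only through the exact term, where it is absorbed by the condition on $H$.
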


\begin{proof}
  See \Cref{sec:apdx-proof_iterative_update} of \cite{hu2026draft}.
\end{proof}

\subsection{D-IMPL: IMPL Implemented by DDPM}

We point out that, IMPL is a schematic framework that assumes an oracle to perform the iterative updates up to an error threshold. To obtain a practical implementation of IMPL, we still need to specify how to efficiently implement the oracle. As discussed above, diffusion models have proven effective in learning complex, high-dimensional, multi-modal distributions. Therefore, in this section, we proceed to show how to instantiate IMPL with \emph{denoising diffusion probabilistic model (DDPM)} \cite{ho2020denoising}, a prominent example of diffusion models, to achieve practical performance improvement.

In order to correctly leverage DDPM for learning each $\pi_h$ in the IMPL framework, we need to adapt the DDPM model to settle two fundamental issues---conditional generation and shifted data distribution---as discussed below. For notational clarity, we use the subscript $\cdot_{h, k}$ to denote the DDPM-related quantities for the $k$\tsup{th} diffusion step in the $h$\tsup{th} iteration. Note that we always have $z_{h,0} = x_h \sim \pi_h(\cdot | \xi)$.

\textbf{Conditional DDPM.} It is evident that, in order for DDPM to approximate the minimization policy $\pi_h(\cdot | \xi)$, all the related kernels and score functions should also be conditioned on $\xi$. Suppose the sampled parameter $\xi$ follows a data distribution $\rho$, and the score function is chosen from a parameterized family $\F = \set{s^{\theta} | \theta \in \varTheta}$. Then the score function $s^{\theta}(\cdot; \xi)$ is learned by minimizing the following \emph{reweighted} loss over $\theta$:
\begin{equation}\label{eq:4-2-reweighted_DDPM_conditional}
  \L_{h,k}(\theta) := \E[\xi, \omega, x_h \sim \pi_h(\cdot | \xi)]{Z(\xi) \cdot \ell_{h,k}(\theta)},
\end{equation}
where $\ell_{h,k}(\theta) := \norm[\big]{s^{\theta}(x_h; \xi) - \ssc_{h,k}(x_h; \xi)}^2$, $\ssc_{h,k}(x; \xi) = -\omega / \sigma_k$, and the weights $Z(\xi)$ will be introduced soon.

\textbf{Importance Sampling.} Note that vanilla DDPM requires access to samples directly sampled from the target distribution $z_0 \sim q_0$ to compute the error $\omega_k$ and thus the loss $\L_k^{\mathrm{DDPM}}$. However, in our IMPL framework, we require learning the updated policy $\pi_h$ using data sampled from the policy $\pi_{h-1}$ learned in the previous iteration, given that the densities of $\pi_h$ and $\pi_{h-1}$ are only different by a known factor. Such difference leads to the issue of data distribution shift.

To settle this issue, we apply the \emph{importance sampling} trick to the diffusion learning objective. Specifically, consider a \emph{reweighted} DDPM objective $\L_{h,k}$ in the form of \eqref{eq:4-2-reweighted_DDPM_conditional}, where
\begin{equation}
  Z(\xi) := \int_{x} \pi_{h-1}(x | \xi) \exp\prn[\big]{ -\lambda f(x; \xi) } \diff x
\end{equation}
is the (intractable) normalization factor of the update in the $h$\tsup{th} iteration, such that the following relation holds:
\begin{equation*}
  \pi_h(x_h | \xi) = \frac{\pi_{h-1}(x_h | \xi) \exp\prn[\big]{ -\lambda f(x_h;\xi) } }{ Z(\xi) }.
\end{equation*}
Then we can directly rewrite $\L_{h,k}(\theta)$ as
\begin{align}
  \L_{h,k}(\theta) &= \E[\xi, \omega, x_h \sim \pi_{h-1}(\cdot | \xi)]{Z(\xi) \cdot \frac{\pi_h(x_h | \xi)}{\pi_{h-1}(x_h | \xi)} \cdot \ell_{h,k}} \nonumber\\
  &= \E[\xi, \omega, x_h \sim \pi_{h-1}(\cdot | \xi)]{\exp\prn[\big]{ -\lambda f(x; \xi) } \cdot \ell_{h,k}}, \label{eq:4-2-reweighted_DDPM_rewritten}
\end{align}
which now becomes a tractable objective. We point out that, if the hypothesis family $\F$ is \emph{realizable} (i.e. $\ssc_{h,k} \in \F$), then we will still have $\hssc_{h,k} = \ssc_{h,k}$ after reweighting.

\textbf{The D-IMPL Algorithm.} Now we are ready to present the D-IMPL algorithm implemented by DDPM, where we use the empirical version of \eqref{eq:4-2-reweighted_DDPM_rewritten}. Specifically, given $N_h$ i.i.d. samples $\D_{h-1} = \set{(\xi_i, x_i, \omega_i) \mid i \in [N_h]}$ obeying the data distribution $\xi_i \sim \rho$, $x_i \sim \pi_{h-1}(\cdot | \xi)$ and $\omega_i \sim \N(\bm{0}, I)$, the empirical loss can be written as
\begin{equationfit}\label{eq:4-2-reweighted_DDPM_practical}
  \hL_{h,k}(\theta) := \dfrac{1}{N_h} \sum\limits_{i=1}^{N_h} \exp\prn[\big]{ -\lambda f(x_i; \xi) } \cdot \norm*{s^{\theta}(x_i; \xi_i) - (-\omega_i / \sigma_k)}^2.
\end{equationfit}
The complete D-IMPL algorithm can be found in \Cref{alg:3-D_IMPL}.

  \begin{algorithm}[t]
    \caption{D-IMPL: IMPL Implemented by DDPM}\label{alg:3-D_IMPL}
    \begin{algorithmic}[1]
      \Require Learning error scheme $\set{\eps_h}_{h=1}^{H}$, noise schedule $\set{\bar{\alpha}_k}_{k=0}^{K}$, dataset sizes $\set{N_h}_{h=1}^{H}$, hypothesis class $\F$.
      \State Generate a dataset $\D_1$ of size $N_1$ using the uniform initial policy $x_i \sim \pi_0(\cdot | \xi) := \mathsf{Unif}\prn[\big]{\X(\xi)}$.
      \For{$h=1,2,\ldots,H$}
        \State Learn $\set{\hstheta_{h,k}}_{k=0}^{K}$ by minimizing \eqref{eq:4-2-reweighted_DDPM_practical} on $\D_h$ in $\F$.

        \State Generate a dataset $\D_{h+1}$ of size $N_{h+1}$ using the $K$-step DDPM with score functions $\brac[\big]{\hssc_{h,k} := s^{\hstheta_{h,k}}}_{k=0}^{K}$.
      \EndFor
    \end{algorithmic}
  \end{algorithm}
  \section{Theoretical Analysis}\label{sec:4-analysis}

In this section, we provide a sample complexity analysis of the D-IMPL algorithm. The following two assumptions are made, which are both standard in literature.

\begin{assumption}[Hypothesis Class]\label{assum:4-D-diffusion-hypothesis_class}
  The parameterized hypothesis class $\F = \set{s^{\theta} | \theta \in \varTheta}$ has the following properties:
  \begin{enumerate}
    \item Approximate: $\exists \eps_{\F} > 0$, such that $\min\limits_{\theta \in \varTheta} \L_{h,k}(\theta) < \eps_{\F}$.
    \item Bounded: $\exists C_{\F} > 0$, such that for any candidate function $s \in \F$, we have $\norm{s(x; \xi)} \leq C_{\F}$ holds for all $\xi \in \varXi$.
    \item Lipschitz: $\exists K_{\varTheta}$, such that for any $\theta, \theta' \in \varTheta$, we have $\norm{s^{\theta}(x; \xi) - s^{\theta'}(x; \xi)} \leq K_{\varTheta} \norm{\theta - \theta'}$ holds for all $\xi \in \varXi$.
  \end{enumerate}
\end{assumption}

\begin{assumption}[PL Loss]\label{assum:4-D-diffusion-PL_loss}
  For any $h \in [H]$ and $k \in [K]$, $\L_{h,k}(\theta)$ is differentiable and satisfies the Polyak-{\barL}ojasiewicz (PL) condition, i.e., there exists a constant $\mu > 0$, such that
  \begin{equation*}
    \L_{h,k}(\theta) - \L_{h,k}(\tstheta_{h,k}) \leq \frac{1}{2\mu}\norm{\nabla \L_{h,k}(\theta)}^2,\quad \forall \theta \in \varTheta,
  \end{equation*}
  where $\tstheta_{h,k} := \arg\min\limits_{\theta \in \varTheta} \L_{h,k}(\theta)$.
\end{assumption}

Now we are ready for the sample complexity of D-IMPL.

\begin{theorem}[Sample Complexity of D-IMPL]\label{thm:main_complexity-D_IMPL}
  Under Assumptions \ref{assum:2-1-minimizers}, \ref{assum:2-2-Lipschitz}, \ref{assum:4-D-diffusion-hypothesis_class} and \ref{assum:4-D-diffusion-PL_loss}, consider the D-IMPL algorithm (\Cref{alg:3-D_IMPL}) with hypothesis class $\F$ and $K$-step DDPM models with denoising schedule $\set{\bar{\alpha}_k = \exp(-2 t_k)}_{k=0}^{K}$, where we set $K = O\prn[\big]{ 1 / \eps^2 }$, $t_0 = O(\eps)$ and $t_k = O\prn[\big]{ \log(1/\eps) }$. Then, for any $0 < \delta < \min\brac[\big]{ 1, 4H\prn[\big]{1 - \exp(-\lambda\alpha)} }$ and $0 < \deltatr < 1$, with sufficiently large $H$ such that
  \begin{equation*}
    H > \frac{2}{\lambda \alpha} \prn*{ \dx \log\prn*{\frac{2}{\alpha}} + \log\prn*{\frac{2}{C_0 \delta}} },
  \end{equation*}
  where the constant $C_0$ defined in \Cref{thm:2-2-level_set_lemma}, by choosing the following learning error thresholds
  \begin{equation*}
    \eps_h < \frac{C_0 \cdot \alpha^{\dx}}{8H} \exp\prn[\big]{ -\lambda\alpha (H-h) } \delta,\quad \forall h \in [H]
  \end{equation*}
  and providing datasets of sizes at least
  \begin{equation*}
    N_h > \widetilde{O}\prn*{\frac{\log(1/\deltatr)}{\eps_h^2}},\quad \forall h \in [H],
  \end{equation*}
  we have, with probability at least $1-\deltatr$,
  \begin{equation}
    \pi_H\prn[\big]{ L_{\alpha}(\xi) \big| \xi } > 1 - \delta,\quad \forall \xi \in \varXi.
  \end{equation}
\end{theorem}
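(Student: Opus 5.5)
The plan is to reduce the statement to \Cref{thm:3-2-sampling_policy_iterative}. The conditions on $H$, $\delta$ and the thresholds $\eps_h$ in \Cref{thm:main_complexity-D_IMPL} are exactly those of \Cref{thm:3-2-sampling_policy_iterative}. It therefore suffices to show the following: with probability at least $1-\deltatr$, the DDPM-based update in each iteration $h\in[H]$ outputs a policy $\pi_h$ with $\dTV\prn[\big]{\pi_h(\cdot|\xi),\nu_h(\cdot|\xi)}<\eps_h$ for all $\xi$. Once this holds, the concentration conclusion follows verbatim. I will allocate failure probability $\deltatr/H$ to each iteration and take a union bound. The resulting extra $\log H$ factor is absorbed into $\widetilde{O}$.

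For a fixed iteration $h$, conditioned on $\pi_{h-1}$, I bound $\dTV(\pi_h,\nu_h)$ by three contributions. I will invoke a standard DDPM convergence result for the discretized OU process (Chen et al.\ style, via Girsanov and Pinsker):
\begin{equation*}
  \dTV(\pi_h,\nu_h)\lesssim e^{-t_K}\sqrt{\dKL(\nu_h\Vert\N(0,I))} + \text{(discretization error)} + \sqrt{\textstyle\sum_k \gamma_k\, \E\norm{\hssc_{h,k}-\ssc_{h,k}}^2}.
\end{equation*}
The target $\nu_h$ is supported in $\mathbb{B}(0,C_x)$ by \Cref{assum:2-1-minimizers}, so the early-stopping and convergence terms are controlled. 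The choices $t_K=O(\log(1/\eps))$, $t_0=O(\eps)$ and $K=O(1/\eps^2)$ make the first two terms $O(\eps)$, which I take with $\eps\asymp\eps_h$. The early stopping at $t_0$ introduces a smoothing error in TV. Because the target has compact support and is merely bounded, I will either measure this in Wasserstein distance and convert, or treat the $t_0$-smoothed $\nu_h$ as the effective target. I expect this to be a technical but standard point.

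The main obstacle is the score estimation term, i.e.\ bounding $\L_{h,k}(\hstheta_{h,k})$ with high probability. The reweighted loss \eqref{eq:4-2-reweighted_DDPM_rewritten} has weights $e^{-\lambda f}$. These are bounded in $[e^{-\lambda(\fs+2KC_x)},e^{-\lambda \fs}]$ by \Cref{assum:2-2-Lipschitz}, after normalizing $f$ by a constant shift, which does not change $\nu_h$. The targets $\omega/\sigma_k$ are sub-Gaussian, and the hypotheses are bounded by $C_\F$. The per-sample loss is therefore sub-exponential, and a truncation argument bounds it by $O(C_\F^2+\dx\log(N/\deltatr)/\sigma_k^2)$. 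Next I will cover $\varTheta$ by a $\eta$-net using the Lipschitz property in \Cref{assum:4-D-diffusion-hypothesis_class}. Hoeffding/Bernstein together with the union bound then gives
\begin{equation*}
  \sup_\theta\abs{\hL_{h,k}(\theta)-\L_{h,k}(\theta)}\le \widetilde{O}\prn[\big]{\sqrt{\log(1/\deltatr)/N_h}}.
\end{equation*}
This yields $\L_{h,k}(\hstheta_{h,k})\le \eps_\F + \widetilde{O}(\sqrt{\log(1/\deltatr)/N_h})$. The PL condition of \Cref{assum:4-D-diffusion-PL_loss} is used to pass from the loss gap to closeness of the minimizer or of the computed iterate, so that optimization by gradient methods actually attains this value. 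I also need to relate $\L_{h,k}$ to the unweighted score error under $\nu_h$: dividing by $Z(\xi)$ converts the first into the second. The lower bound on $Z(\xi)$ is where $\lambda$ enters and where large $\varLambda$ would hurt, which is consistent with the remark after \Cref{thm:3-1-sampling_policy_ideal}.

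Plugging the score error into the convergence bound gives $\dTV(\pi_h,\nu_h)\lesssim \eps + \sqrt{\eps_\F} + (\log(1/\deltatr)/N_h)^{1/4}$ up to logs. With the stated $N_h\gtrsim \widetilde{O}(\log(1/\deltatr)/\eps_h^2)$ the last term is $\sqrt{\eps_h}$, and the constants in $\widetilde{O}$ are tuned so that the total falls below $\eps_h$. This assumes $\eps_\F$ is small relative to $\eps_h$. If the $1/4$-power loss is not acceptable, I will instead use a localized, Bernstein-type bound exploiting the nonnegativity and realizability-type structure of the loss, which gives a fast $1/N_h$ rate. Finally, the union bound over $h$ and the application of \Cref{thm:3-2-sampling_policy_iterative} complete the argument.
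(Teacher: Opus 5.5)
Your skeleton matches the paper's: a per-iteration DDPM convergence bound (the paper invokes Theorem~2 of Gaur et al.\ where you use a Girsanov--Pinsker bound), then a statistical bound on the reweighted score loss, then \Cref{thm:3-2-sampling_policy_iterative}, with a union bound over $h$ that the paper leaves implicit. However, two steps do not go through as written.

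The first gap is the rate. Your chain gives $\dTV(\pi_h,\nu_h)\lesssim(\log(1/\deltatr)/N_h)^{1/4}$, which is of order $\sqrt{\eps_h}$ at $N_h\asymp\log(1/\deltatr)/\eps_h^2$. No tuning of constants makes $c\sqrt{\eps_h}<\eps_h$ for small $\eps_h$, so your main line only supports $N_h\gtrsim\eps_h^{-4}$. The same square root appears in two other places.
\begin{itemize}
\item The TV distance between $\nu_1\propto\mathsf{Unif}(\X(\xi))\,e^{-\lambda f}$ and its OU marginal at time $t_0$ is generically itself of order $\sqrt{t_0}$. No Wasserstein detour helps, and you need $t_0\lesssim\eps_h^2$.
\item The approximation term needs $\eps_\F\lesssim\eps_h^2$ up to logs, not merely $\eps_\F\ll\eps_h$.
\end{itemize}
So the fast-rate fallback is the actual content of the argument, not an option. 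Since realizability is not assumed, it needs a genuine Bernstein-type variance condition on the per-sample excess loss $e^{-\lambda f}\bigl(\ell_{h,k}(\theta)-\ell_{h,k}(\tstheta_{h,k})\bigr)$. PL quadratic growth together with the $K_\varTheta$-Lipschitz parameterization can supply one when the minimizer is unique, but this has to be proved. The paper does not close this gap either. Its TV lemma yields $\tfrac12\sqrt{T(\eps+2\eps_\F)}$, which its final step writes as $\tfrac{\sqrt T}{2}(\eps+2\eps_\F)$. Its $O(\sqrt{t_0}\log(1/t_0))$ term is only $\widetilde O(\sqrt{\eps})$ for $t_0\asymp\eps$.

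The second gap is uniformity in $\xi$. \Cref{thm:3-2-sampling_policy_iterative} needs $\dTV(\pi_h(\cdot|\xi),\nu_h(\cdot|\xi))<\eps_h$ for every $\xi\in\varXi$. But $\L_{h,k}$, its empirical version, your uniform-deviation bound over the $\eta$-net, and item (1) of \Cref{assum:4-D-diffusion-hypothesis_class} are all averages over $\xi\sim\rho$. Dividing by $Z(\xi)$ therefore gives only the $\rho$-averaged score error. That controls $\dTV$ in $\rho$-average, or on a set of large $\rho$-measure, but not at every $\xi$.

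The paper uses PL precisely here. Quadratic growth gives $\|\hstheta_k-\tstheta_k\|^2\lesssim\tL_k(\hstheta_k)-\tL_k(\tstheta_k)$. The Lipschitz parameterization, which holds uniformly in $(x,\xi)$, then converts this into a bound on $\sup_{x,\xi}\|\hssc_k(x;\xi)-\tssc_k(x;\xi)\|$ that is valid at every $\xi$. You use PL only to certify that the optimizer reaches the empirical minimizer, so this step is missing from your plan. The approximation part, $\tssc_k$ versus the true score, is still controlled only in $\rho$-average by the assumption. The paper simply reads $\eps_\F$ pointwise in $\xi$ there, and you would have to do the same or strengthen the assumption.
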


\begin{proof}
  See \Cref{sec:apdx-proof_D_IMPL} of \cite{hu2026draft}.
\end{proof}
  \section{Numerical Simulation}\label{sec:5-simulation}

In this section, we demonstrate the performance of the practical D-IMPL algorithm by numerical simulations across a diverse range of objective function families.

\textbf{Capability of Capturing Multi-modal Landscapes.}

In \Cref{fig:5-heatmaps-multi_modal}, we show the minimization policies learned by the D-IMPL algorithm for the above objective functions using heatmaps of their densities. It can be observed that the the learned policies concentrate very well around the ground-truth minimizer set, regardless of its topology. Specifically, the proposed method correctly handles the cases where the number of minimizers changes under different parameters (\texttt{Himm}), where minimizers exist in large numbers (\texttt{trig}), and where the minimizer set form a sub-manifold of the space (\texttt{quad-clp} and \texttt{quad-lin}). These results demonstrate D-IMPL's strong capability to capture multi-modal landscapes.

\begin{figure}[t]
  \centering
  \includegraphics[width=\linewidth]{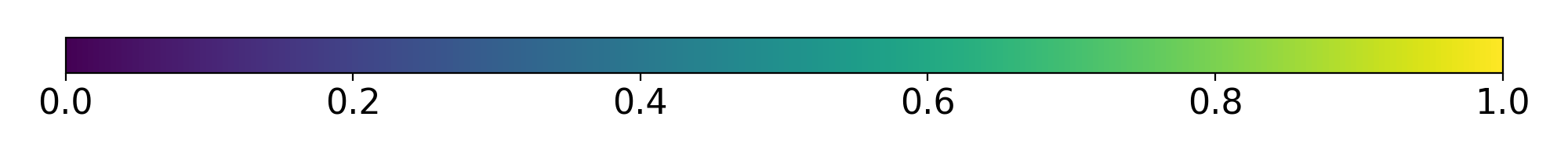}

  \vspace{-2mm}
  \begin{subfigure}[b]{\linewidth}
    \centering
    \includegraphics[width=\linewidth]{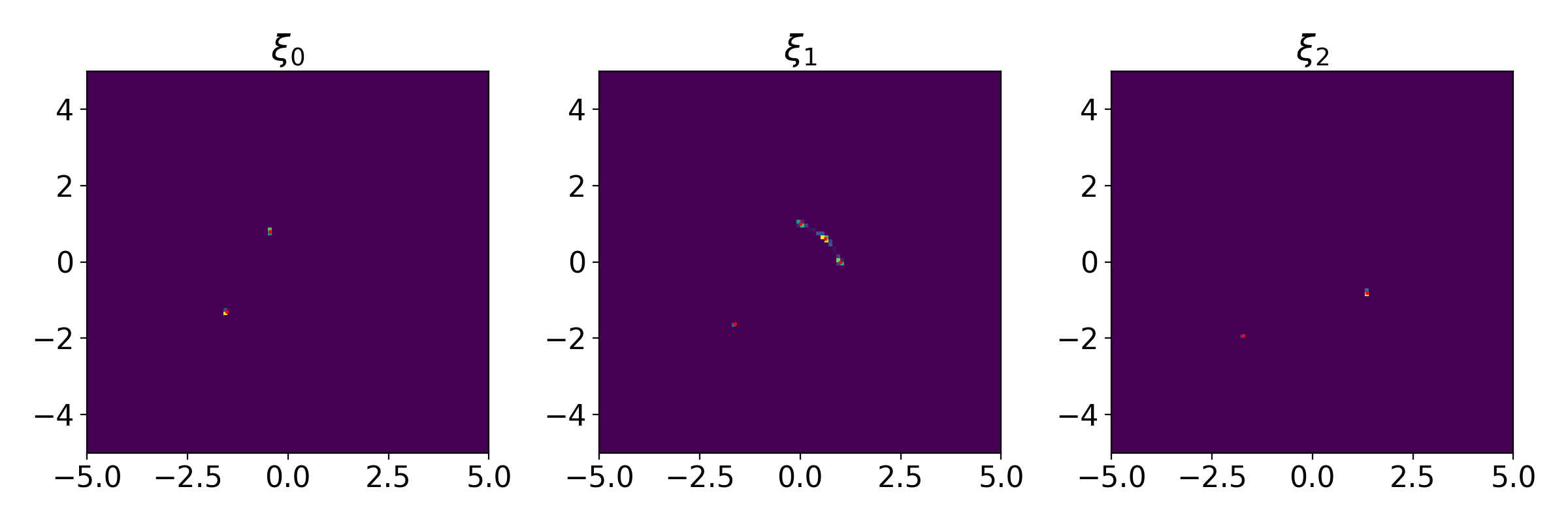}
    
    \vspace{-2mm}
    \caption{\texttt{Himm} (\textcolor{red}{red}: ground truth; \textcolor{Purple}{color}\textcolor{Emerald}{-}\textcolor{Gold1}{scale}: learned)}
  \end{subfigure}

  \begin{subfigure}[b]{\linewidth}
    \centering
    \includegraphics[width=\linewidth]{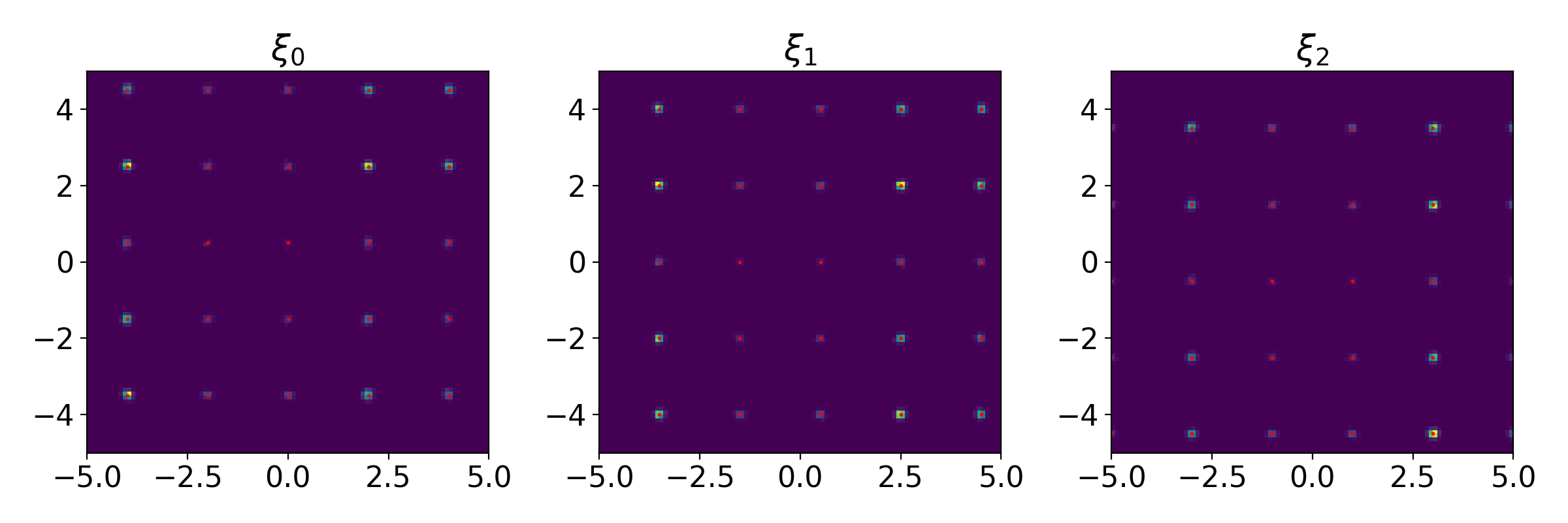}
    
    \vspace{-2mm}
    \caption{\texttt{trig} (\textcolor{red}{red}: ground truth; \textcolor{Purple}{color}\textcolor{Emerald}{-}\textcolor{Gold1}{scale}: learned)}
  \end{subfigure}

  \begin{subfigure}[b]{\linewidth}
    \centering
    \includegraphics[width=\linewidth]{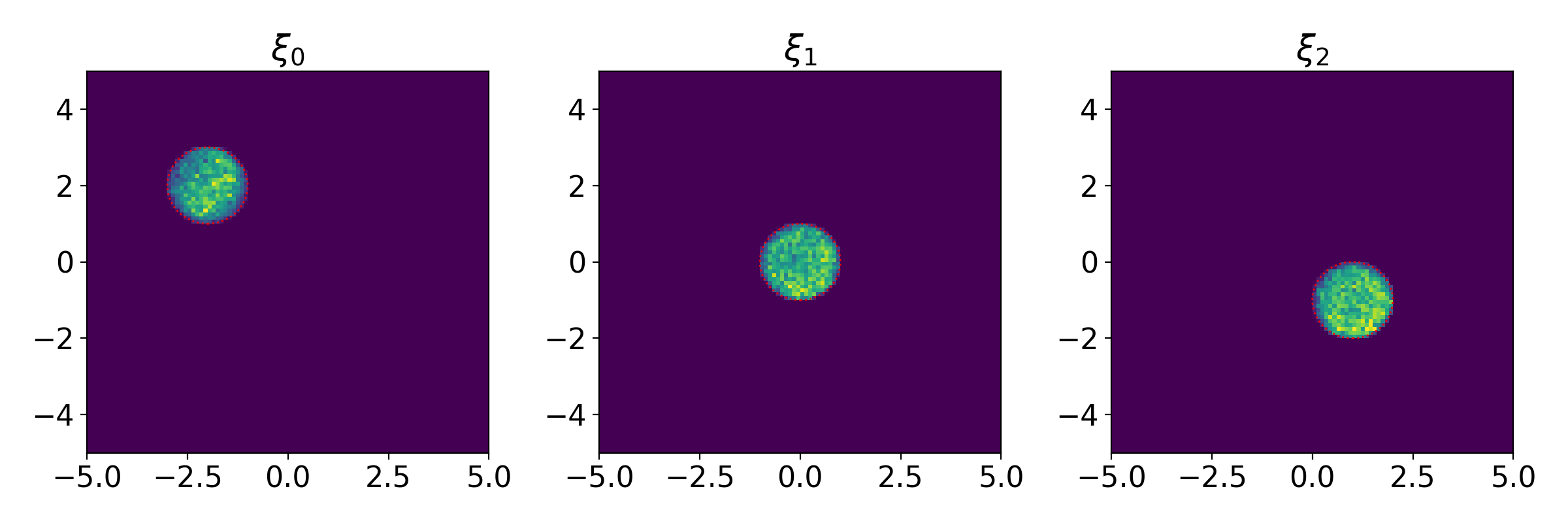}
    
    \vspace{-2mm}
    \caption{\texttt{quad-clp} (dotted disk: ground truth; \textcolor{Purple}{color}\textcolor{Emerald}{-}\textcolor{Gold1}{scale}: learned)}
  \end{subfigure}

  \caption{D-IMPL-learned Policies for Multi-modal Objectives.}\label{fig:5-heatmaps-multi_modal}
\end{figure}

\textbf{Capability of Handling Constraint Sets.} In the following simulations, we demonstrate D-IMPL's capability of handling constraint sets. For this purpose, we consider the following constrained quadratic programming (QP) problem:
\begin{align*}
  \min_{x \in \R^2}~ \tfrac{1}{2} x^{\top} Q x + c^{\top} x,\qquad
  \textrm{s.t.}~Ax \leq b.
\end{align*}
We consider two different parameterized families derived from this problem: (1) fixed constraints (\texttt{qp-fixed}), where $\xi = (\vec(Q), c)$; (2) parameterized constraints (\texttt{qp-param}), where $\xi = (\vec(Q), c, b)$. The minimization policies learned by the D-IMPL algorithm are shown in \Cref{fig:5-heatmaps-constraint_set}. It can be observed that D-IMPL handles both cases quite well, with the learned densities completely inside the constraint set and concentrate closely around the ground truth.

\begin{figure}[t]
  \centering
  
  \begin{subfigure}[b]{\linewidth}
    \includegraphics[width=\linewidth]{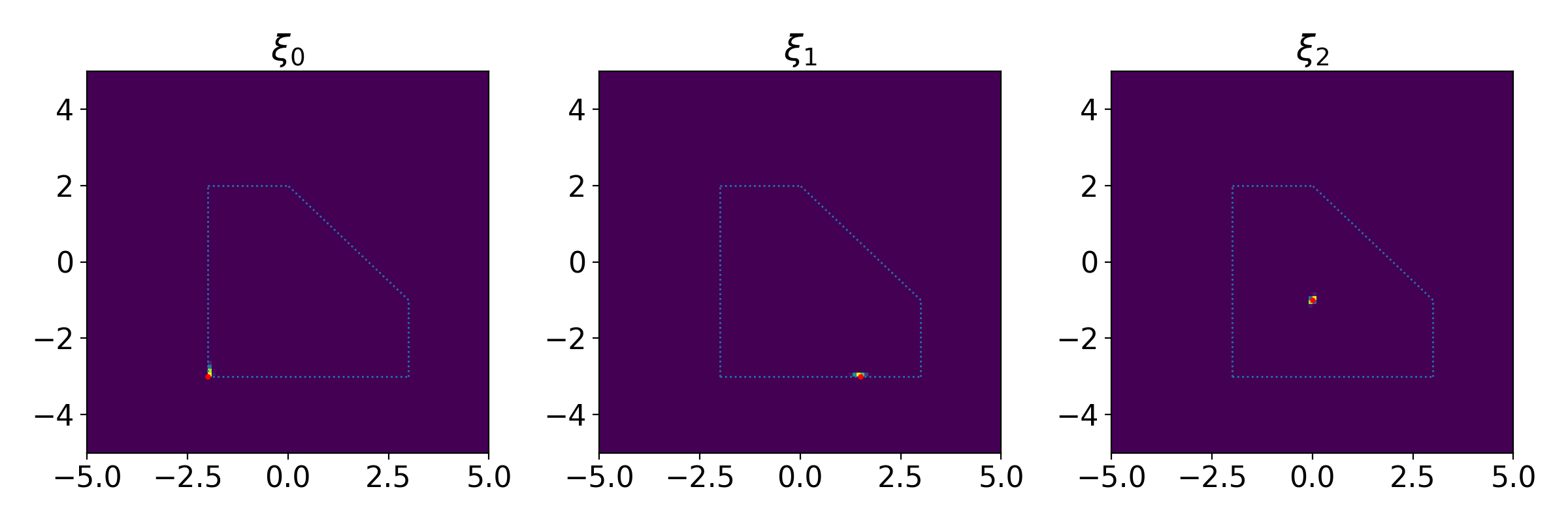}
  
    \vspace{-2mm}
    \caption{\texttt{qp-fixed} (\textcolor{ProcessBlue}{blue-dotted}: fixed constraint set)}
  \end{subfigure}

  \begin{subfigure}[b]{\linewidth}
    \includegraphics[width=\linewidth]{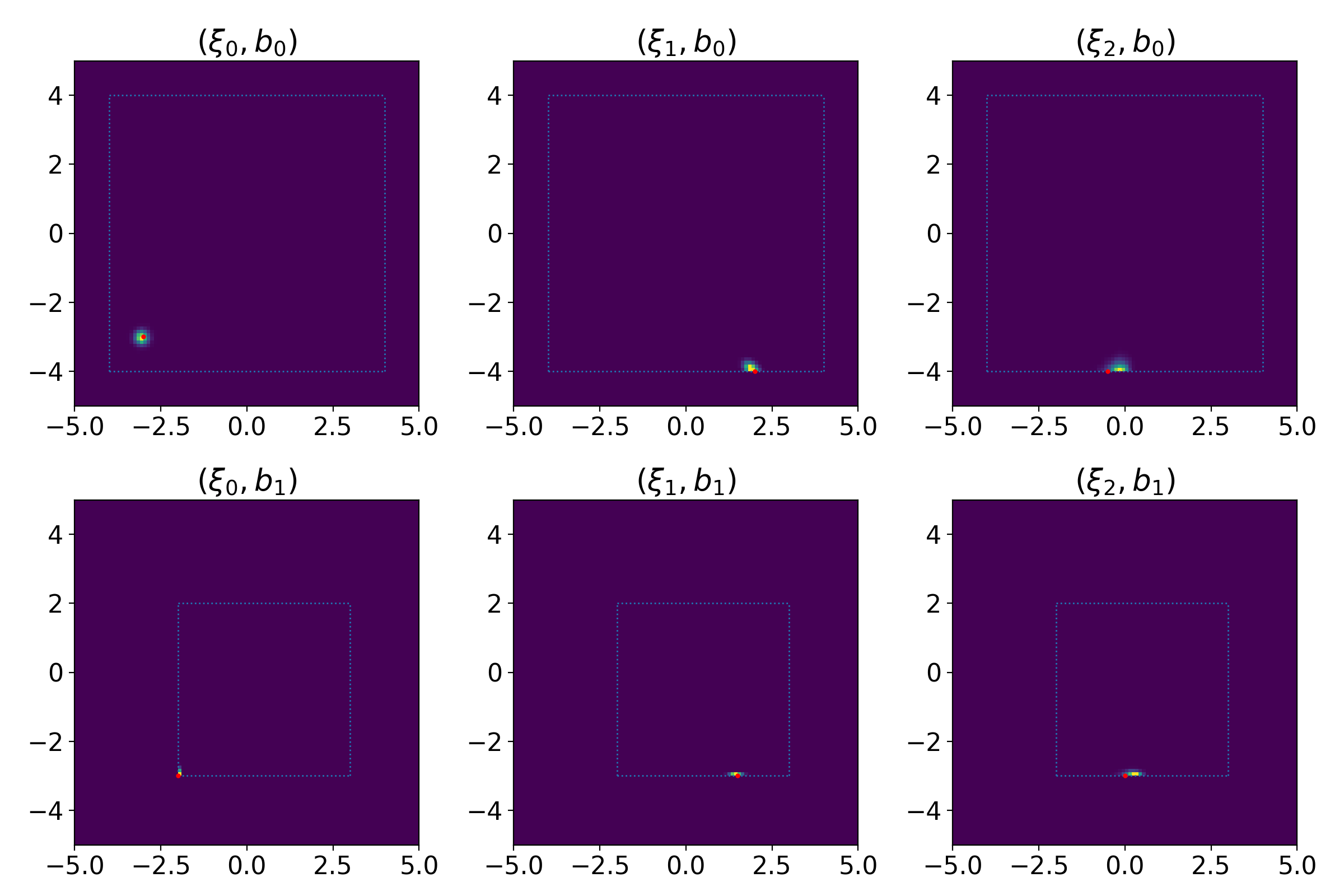}
  
    \vspace{-2mm}
    \caption{\texttt{qp-param} (\textcolor{ProcessBlue}{blue-dotted}: parameterized constraint set)}
  \end{subfigure}

  \vspace{-1mm}
  \caption{D-IMPL-learned Policies for Constrained Problems (\textcolor{red}{red}: ground truth; \textcolor{Purple}{color}\textcolor{Emerald}{-}\textcolor{Gold1}{scale}: learned).}\label{fig:5-heatmaps-constraint_set}
\end{figure}
  
\section{Conclusion}

In this paper, we propose D-IMPL, a DDPM-based algorithm that learns a universal, learning-based stochastic minimization policy for solving parameterized families of constrained BBO problems. The proposed method assumes only black-box access to queries of the objective function, ensures minimal additional computational overhead at the inference stage, and is capable of capturing complex multi-modal landscapes. Future work includes extending the design to other generative models and scaling up the algorithm.

\vspace{-0.5mm}
\section*{Acknowledgment}

We gratefully acknowledge insightful discussions with Yuyang Zhang and Haitong Ma that contributed to this paper.

\vspace{-0.5mm}

  % References
  \bibliographystyle{IEEEtran}
  \bibliography{ref}

@article{hu2026draft,
  title={Solving Black-box Optimization Families using Generative Models},
  author={Hu, Yang and Li, Na},
  journal={arXiv preprint},
  year={2026}
}

@article{gaur2025DDPM,
  title={Improved Sample Complexity For Diffusion Model Training Without Empirical Risk Minimizer Access},
  author={Gaur, Mudit and Trivedi, Prashant and Kunapuli, Sasidhar and Bedi, Amrit Singh and Aggarwal, Vaneet},
  journal={arXiv preprint 2505.18344},
  year={2025}
}

@book{shalev2014understanding,
  title={Understanding machine learning: From theory to algorithms},
  author={Shalev-Shwartz, Shai and Ben-David, Shai},
  year={2014},
  publisher={Cambridge university press}
}

@article{beck2003mirror,
  title={Mirror descent and nonlinear projected subgradient methods for convex optimization},
  author={Beck, Amir and Teboulle, Marc},
  journal={Operations Research Letters},
  volume={31},
  number={3},
  pages={167--175},
  year={2003},
  publisher={Elsevier}
}

@article{alarie2021two,
  title={Two Decades of Blackbox Optimization Applications},
  author={Alarie, St{\'e}phane and Audet, Charles and Gheribi, A{\"\i}men E and Kokkolaras, Michael and Le Digabel, S{\'e}bastien},
  journal={EURO Journal on Computational Optimization},
  volume={9},
  pages={100011},
  year={2021},
  publisher={Elsevier}
}

@article{wang2023recent,
  title={Recent Advances in {B}ayesian Optimization},
  author={Wang, Xilu and Jin, Yaochu and Schmitt, Sebastian and Olhofer, Markus},
  journal={ACM Computing Surveys},
  volume={55},
  number={13s},
  pages={1--36},
  year={2023},
  publisher={ACM New York, NY}
}

@article{ren2025tsrsr,
  title={{TS-RSR}: A Provably Efficient Approach for Batch {B}ayesian Optimization},
  author={Ren, Zhaolin and Li, Na},
  journal={SIAM Journal on Optimization},
  volume={35},
  number={3},
  pages={2155--2181},
  year={2025},
  publisher={SIAM}
}

@inproceedings{chen2022improve,
  title={Improve single-point zeroth-order optimization using high-pass and low-pass filters},
  author={Chen, Xin and Tang, Yujie and Li, Na},
  booktitle={International Conference on Machine Learning},
  pages={3603--3620},
  year={2022},
  organization={PMLR}
}

@inproceedings{ren2023escaping,
  title={Escaping saddle points in zeroth-order optimization: the power of two-point estimators},
  author={Ren, Zhaolin and Tang, Yujie and Li, Na},
  booktitle={International Conference on Machine Learning},
  pages={28914--28975},
  year={2023},
  organization={PMLR}
}

@article{larson2019derivative,
  title={Derivative-free Optimization Methods},
  author={Larson, Jeffrey and Menickelly, Matt and Wild, Stefan M},
  journal={Acta Numerica},
  volume={28},
  pages={287--404},
  year={2019},
  publisher={Cambridge University Press}
}

@article{amos2023tutorial,
  title={Tutorial on Amortized Optimization},
  author={Amos, Brandon and others},
  journal={Foundations and Trends{\textregistered} in Machine Learning},
  volume={16},
  number={5},
  pages={592--732},
  year={2023},
  publisher={Now Publishers, Inc.}
}

@article{ho2020denoising,
  title={Denoising Diffusion Probabilistic Models},
  author={Ho, Jonathan and Jain, Ajay and Abbeel, Pieter},
  journal={Advances in Neural Information Processing Systems},
  volume={33},
  pages={6840--6851},
  year={2020}
}

@article{lipman2022flow,
  title={Flow Matching for Generative Modeling},
  author={Lipman, Yaron and Chen, Ricky TQ and Ben-Hamu, Heli and Nickel, Maximilian and Le, Matt},
  journal={arXiv preprint 2210.02747},
  year={2022}
}

@inproceedings{rombach2022high,
  title={High-resolution Image Synthesis with Latent Diffusion Models},
  author={Rombach, Robin and Blattmann, Andreas and Lorenz, Dominik and Esser, Patrick and Ommer, Bj{\"o}rn},
  booktitle={Proceedings of the IEEE/CVF Conference on Computer Vision and Pattern Recognition},
  pages={10684--10695},
  year={2022}
}

@article{balaji2022ediff,
  title={{eDiff-I}: Text-to-image Diffusion Models with an Ensemble of Expert Denoisers},
  author={Balaji, Yogesh and Nah, Seungjun and Huang, Xun and Vahdat, Arash and Song, Jiaming and Zhang, Qinsheng and Kreis, Karsten and Aittala, Miika and Aila, Timo and Laine, Samuli and others},
  journal={arXiv preprint 2211.01324},
  year={2022}
}

@article{li2022diffusion,
  title={Diffusion-{LM} improves controllable text generation},
  author={Li, Xiang and Thickstun, John and Gulrajani, Ishaan and Liang, Percy S and Hashimoto, Tatsunori B},
  journal={Advances in Neural Information Processing Systems},
  volume={35},
  pages={4328--4343},
  year={2022}
}

@inproceedings{bansal2023universal,
  title={Universal guidance for diffusion models},
  author={Bansal, Arpit and Chu, Hong-Min and Schwarzschild, Avi and Sengupta, Soumyadip and Goldblum, Micah and Geiping, Jonas and Goldstein, Tom},
  booktitle={Proceedings of the IEEE/CVF Conference on Computer Vision and Pattern Recognition},
  pages={843--852},
  year={2023}
}

@article{ho2022video,
  title={Video Diffusion Models},
  author={Ho, Jonathan and Salimans, Tim and Gritsenko, Alexey and Chan, William and Norouzi, Mohammad and Fleet, David J},
  journal={Advances in Neural Information Processing Systems},
  volume={35},
  pages={8633--8646},
  year={2022}
}

@article{jing2022torsional,
  title={Torsional diffusion for molecular conformer generation},
  author={Jing, Bowen and Corso, Gabriele and Chang, Jeffrey and Barzilay, Regina and Jaakkola, Tommi},
  journal={Advances in Neural Information Processing Systems},
  volume={35},
  pages={24240--24253},
  year={2022}
}

@book{oksendal2003stochastic,
  title={Stochastic differential equations},
  author={{\slashO}ksendal, Bernt},
  pages={38--50},
  year={2003},
  publisher={Springer}
}

@article{anderson1982reverse,
  title={Reverse-time diffusion equation models},
  author={Anderson, Brian DO},
  journal={Stochastic Processes and their Applications},
  volume={12},
  number={3},
  pages={313--326},
  year={1982},
  publisher={Elsevier}
}

@inproceedings{krishnamoorthy2023diffusion,
  title={Diffusion Models for Black-box Optimization},
  author={Krishnamoorthy, Siddarth and Mashkaria, Satvik Mehul and Grover, Aditya},
  booktitle={International Conference on Machine Learning},
  pages={17842--17857},
  year={2023},
  organization={PMLR}
}

@article{li2024diffusion,
  title={Diffusion model for data-driven black-box optimization},
  author={Li, Zihao and Yuan, Hui and Huang, Kaixuan and Ni, Chengzhuo and Ye, Yinyu and Chen, Minshuo and Wang, Mengdi},
  journal={arXiv preprint 2403.13219},
  year={2024}
}

@article{hoseinpour2025diffopf,
  title={{DiffOPF}: Diffusion Solver for Optimal Power Flow},
  author={Hoseinpour, Milad and Dvorkin, Vladimir},
  journal={arXiv preprint 2510.14075},
  year={2025}
}

@article{zhou2024diffusion,
  title={Diffusion model predictive control},
  author={Zhou, Guangyao and Swaminathan, Sivaramakrishnan and Raju, Rajkumar Vasudeva and Guntupalli, J Swaroop and Lehrach, Wolfgang and Ortiz, Joseph and Dedieu, Antoine and L{\'a}zaro-Gredilla, Miguel and Murphy, Kevin},
  journal={arXiv preprint 2410.05364},
  year={2024}
}

@article{huang2024toward,
  title={Toward Near-Globally Optimal Nonlinear Model Predictive Control via Diffusion Models},
  author={Huang, Tzu-Yuan and Lederer, Armin and Hoischen, Nicolas and Br{\"u}digam, Jan and Xiao, Xuehua and Sosnowski, Stefan and Hirche, Sandra},
  journal={arXiv preprint 2412.08278},
  year={2024}
}

@article{romer2024diffusion,
  title={Diffusion predictive control with constraints},
  author={R{\"o}mer, Ralf and von Rohr, Alexander and Schoellig, Angela P},
  journal={arXiv preprint 2412.09342},
  year={2024}
}

  % Appendix
  \newpage
  \appendix
  \subsection{Notations}

\vspace{-2mm}
\begin{table}[H]
    \centering
    \begin{tabular}{c|l}
      \specialrule{1.5pt}{0pt}{0pt}
      \textbf{Notation} & \textbf{Explanation} \\
      \specialrule{0.4pt}{0pt}{0pt}
      $\mathcal{P}(S)$ & power set of set $S$ \\
      $\Delta(S)$ & set of all distributions over set $S$ \\
      $\norm{\cdot}$ & Euclidean $\ell_2$ norm of vectors/matrices \\
      $\mathbb{B}(x, r)$ & ball of radius $r$ centered at $x$, i.e., $\set{y \mid \norm{y-x} \leq r}$ \\
      $\appropto$ & ``approximately proportional to'' (definition on page 5) \\
      $\dKL(p \Vert q)$ & Kullback–Leibler divergence of distribution $p$ against $q$  \\
      $\dTV(p, q)$ & total variation distance between distribution $p$ and $q$ \\
      $O(\cdot)$, $\widetilde{O}(\cdot)$ & standard Landau notations for asymptotics \\
      $X \eqd Y$ & random variables $X$ and $Y$ follow the same distribution \\
      $\vec(M)$ & vectorization of matrix $M$ \\
      \specialrule{1.5pt}{0pt}{0pt}
    \end{tabular}
    \caption{Table of notations.}
\end{table}
\vspace{-4mm}

\subsection{A Technical Lemma Regarding Level Sets}\label{sec:apdx-proof_level_set_lemma}

\begin{lemma}\label{thm:2-2-level_set_lemma}
  Under \Cref{assum:2-1-minimizers} and \ref{assum:2-2-Lipschitz}, with $\pi_0(\cdot | \xi)$ denoting the uniform distribution $\mathsf{Unif}\prn[\big]{ \X(\xi) }$, we have
  \begin{equation*}
    \pi_0\prn[\big]{ L_{\alpha}(\xi) \big| \xi }
    \geq C_0 \cdot \alpha^{\dx},
  \end{equation*}
  where constant $C_0 := \pi^{\dx/2} / \prn[\big]{ \Gamma(\dx/2+1) \Vol\prn[\big]{ \X(\xi) } K^{\dx} }$ is uniform for all parameters $\xi \in \varXi$.
\end{lemma}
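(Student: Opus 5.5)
Fix $\xi \in \varXi$ and pick any minimizer $\xs \in L_0(\xi)$, which exists because $\X(\xi)$ is compact (\Cref{assum:2-1-minimizers}) and $f(\cdot;\xi)$ is continuous (\Cref{assum:2-2-Lipschitz}). The plan is to exhibit a ball around $\xs$ on which the objective stays within $\alpha$ of $\fs(\xi)$, and then lower bound its volume relative to $\Vol\prn[\big]{\X(\xi)}$. By \Cref{assum:2-2-Lipschitz}, every $x$ with $\norm{x - \xs} \leq \alpha/K$ satisfies
\begin{equation*}
  f(x;\xi) \leq f(\xs;\xi) + K\norm{x-\xs} \leq \fs(\xi) + \alpha .
\end{equation*}
Hence $\mathbb{B}(\xs, \alpha/K) \cap \X(\xi) \subseteq L_{\alpha}(\xi)$. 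Since $\pi_0(\cdot|\xi)$ is uniform on $\X(\xi)$, it follows that
\begin{equation*}
  \pi_0\prn[\big]{L_\alpha(\xi) \big| \xi} \geq \frac{\Vol\prn[\big]{\mathbb{B}(\xs,\alpha/K)\cap \X(\xi)}}{\Vol\prn[\big]{\X(\xi)}}.
\end{equation*}

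If the whole ball lies in $\X(\xi)$, the numerator equals the Euclidean ball volume $\pi^{\dx/2}(\alpha/K)^{\dx}/\Gamma(\dx/2+1)$. Dividing by $\Vol\prn[\big]{\X(\xi)}$ then gives exactly $C_0\,\alpha^{\dx}$ with the stated $C_0$. Two minor points remain. First, the claim that $C_0$ is uniform in $\xi$ requires $\Vol\prn[\big]{\X(\xi)}$ to be bounded. It is bounded above by $\Vol\prn[\big]{\mathbb{B}(0,C_x)}$, so I would either replace $\Vol\prn[\big]{\X(\xi)}$ by this bound to get a genuinely $\xi$-independent constant, or read $C_0$ as in the statement (which only helps, since a smaller denominator is a weaker bound). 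Second, one needs $\Vol\prn[\big]{\X(\xi)}>0$ for $\pi_0$ to be defined at all; this is implicit in the setup.

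The main obstacle is the boundary case: the minimizer may lie on or near $\partial \X(\xi)$ (typical for the constrained QPs in the paper), so the ball need not be contained in $\X(\xi)$. There the intersection volume can be a small fraction of the ball, or even negligible for cusp-like sets. I would handle this by imposing a mild regularity condition that I believe is implicitly intended, namely an interior-cone or ``fat set'' condition: $\Vol\prn[\big]{\mathbb{B}(x,r)\cap\X(\xi)} \geq c\,\Vol\prn[\big]{\mathbb{B}(x,r)}$ for all $x\in\X(\xi)$ and small $r$. This only changes $C_0$ by a factor $c$. Alternatively, when the minimizer is interior or $\alpha$ is small relative to the distance to the boundary, the clean bound holds verbatim. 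With either route, the bound $\pi_0\prn[\big]{L_\alpha(\xi)|\xi} \geq C_0\alpha^{\dx}$ follows for every $\xi$ with the same constant.
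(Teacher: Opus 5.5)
Your argument is the paper's argument: pick $\xs\in L_0(\xi)$, use $K$-Lipschitzness to place $\mathbb{B}(\xs,\alpha/K)$ inside the $\alpha$-level set, and divide the ball's volume by $\Vol(\X(\xi))$. The paper, however, simply asserts $\mathbb{B}(\xs,\alpha/K)\subset L_\alpha(\xi)$ and $\pi_0(\mathbb{B}(\xs,\alpha/K)\mid\xi)=\pi^{\dx/2}(\alpha/K)^{\dx}/(\Gamma(\dx/2+1)\Vol(\X(\xi)))$. This overlooks that $L_\alpha(\xi)\subseteq\X(\xi)$ by definition. It also calls $C_0$ uniform in $\xi$ even though $C_0$ contains $\Vol(\X(\xi))$. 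So both issues you raise are genuine gaps in the paper's proof, not in your reasoning.

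The boundary gap cannot be closed. Take $\dx=1$, $\X(\xi)\equiv[0,1]$, $f(x;\xi)=Kx$ and $0<\alpha<K$. Then $\pi_0(L_\alpha(\xi)\mid\xi)=\alpha/K$, whereas $C_0\alpha=2\alpha/K$ because $\pi^{1/2}/\Gamma(3/2)=2$. The claimed bound is off by exactly the half of the ball lying outside $\X(\xi)$. The statement is therefore false as written. What your proposal can establish is a corrected lemma, and you should present it as such.

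Two small corrections to your repair. First, interiority of $\xs$ does not by itself give the literal bound. For $f(x;\xi)=K\lvert x-1/2\rvert$ on $[0,1]$ and $\alpha>K/2$, the right-hand side $C_0\alpha=2\alpha/K$ exceeds $1$. The clean constant requires $\mathbb{B}(\xs,\alpha/K)\subseteq\X(\xi)$, i.e.\ $\alpha/K\le\mathrm{dist}(\xs,\partial\X(\xi))$, so your ``or'' should be ``and''. Similarly, your fatness condition must hold with $c$ and $r_0$ uniform in $\xi$, and it only yields the bound for $\alpha\le Kr_0$.

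Second, your parenthetical about uniformity has the direction reversed. Since $\Vol(\X(\xi))\le\Vol(\mathbb{B}(0,C_x))$, the statement's $C_0$ is the larger, hence stronger, constant, but it is not $\xi$-independent. Replacing $\Vol(\X(\xi))$ by $\Vol(\mathbb{B}(0,C_x))$ gives $\pi_0(L_\alpha(\xi)\mid\xi)\ge c\,(\alpha/(KC_x))^{\dx}$ for all $\xi$ and all $\alpha\le Kr_0$. A $\xi$-independent constant such as $c\,(KC_x)^{-\dx}$ is exactly what the later choices of $\varLambda$, $H$ and $\eps_h$ need. So this corrected lemma can be substituted there, provided $\alpha$ is small enough.
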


\begin{proof}
  By \Cref{assum:2-2-Lipschitz}, given any $\xs \in L_0(\xi)$, we have
  \begin{equation*}
    \norm*{f(x; \xi) - f(\xs; \xi)} \leq K \norm{x - \xs} \leq \alpha,~ \forall x \in \mathbb{B}\prn*{ \xs, \tfrac{\alpha}{K} }.
  \end{equation*} 
  Therefore, we have $\mathbb{B}\prn*{ \xs, \tfrac{\alpha}{K} } \subset L_{\alpha}$, and thus
  \begin{equation*}
    \pi_0\prn[\big]{ L_{\alpha}(\xi) \big| \xi }
    \geq \pi_0\prn[\big]{ \mathbb{B}\prn[\big]{ x^*, \tfrac{\alpha}{K} } \big| \xi }
    = \frac{\pi^{\dx/2} \prn[\big]{\frac{\alpha}{K}}^d}{\Gamma \prn[\big]{ \tfrac{\dx}{2}+1 } \Vol\prn[\big]{ \X(\xi) }},
  \end{equation*}
  which completes the proof.
\end{proof}

\subsection{Proof of \texorpdfstring{\Cref{thm:3-1-sampling_policy_ideal}}{Lemma 2}: Concentration of the Ideal Update}\label{sec:apdx-proof_idea_update}

The proof consists of two main steps:
\begin{itemize}
  \item \textbf{Step 1}: Expand $\pistar\prn[\big]{ L_{\alpha}(\xi) \big| \xi }$ in terms of $\pi_0(\cdot | \xi)$ and $\exp\prn[\big]{ - \varLambda f(x; \xi) }$, and lower bound them using the definition of $L_{\alpha}(\xi)$. Intuitively, when $\varLambda$ is sufficiently large, the density of $\pistar(\cdot | \xi)$ should concentrate around $L_{\alpha}(\xi)$.

  \item \textbf{Step 2}: Lower bound $\pi_0\prn[\big]{ L_{\alpha/2}(\xi) \big| \xi }$ using \Cref{thm:2-2-level_set_lemma}.
\end{itemize}
Following the idea above, we have the following proof.

\begin{proof}[Proof of \Cref{thm:3-1-sampling_policy_ideal}]
  Fix a parameter $\xi \in \varXi$, and we proceed to lower bound $\pistar\prn[\big]{ L_{\alpha}(\xi) \big| \xi }$. Indeed, by definition we have
  \begin{align}
    &\pistar\prn[\big]{ L_{\alpha}(\xi) \big| \xi } 
    ={} \int_{L_{\alpha}(\xi)} \pistar(x | \xi) \diff x \nonumber\\
    ={}& \frac{\int_{L_{\alpha}(\xi)} \pi_0(x | \xi) \exp\prn[\big]{ -\varLambda f(x; \xi) } \diff x}{\int_{\X(\xi)} \pi_0(x | \xi) \exp\prn[\big]{ -\varLambda f(x; \xi) } \diff x} \nonumber\\
%    ={}& 1 - \frac{\int_{\X(\xi) \setminus L_{\alpha}(\xi)} \pi_0(x | \xi) \exp\prn[\big]{ -\varLambda f(x; \xi) } \diff x}{\int_{L_{\alpha}(\xi)} \pi_0(x | \xi) \exp\prn[\big]{ -\varLambda f(x; \xi) } \diff x + \int_{\X(\xi) \setminus L_{\alpha}(\xi)} \pi_0(x | \xi) \exp\prn[\big]{ -\varLambda f(x; \xi) } \diff x} \nonumber\\
    \geq{}& 1 - \frac{\int_{\X(\xi) \setminus L_{\alpha}(\xi)} \pi_0(x | \xi) \exp\prn[\big]{ -\varLambda f(x; \xi) } \diff x}{\int_{L_{\alpha}(\xi)} \pi_0(x | \xi) \exp\prn[\big]{ -\varLambda f(x; \xi) } \diff x} \nonumber\\
    \geq{}& 1 - \frac{\int_{\X(\xi) \setminus L_{\alpha}(\xi)} \pi_0(x | \xi) \exp\prn[\big]{ -\varLambda f(x; \xi) } \diff x}{\int_{L_{\alpha/2}(\xi)} \pi_0(x | \xi) \exp\prn[\big]{ -\varLambda f(x; \xi) } \diff x} \nonumber\\
    \geq{}& 1 - \frac{\exp\prn[\big]{ -\varLambda (\fs(\xi) + \alpha) } \pi_0\prn[\big]{ \X(\xi) \setminus L_{\alpha}(\xi) \big| \xi }}{\exp\prn[\big]{ -\varLambda (\fs(\xi) + \alpha/2) } \pi_0\prn[\big]{ L_{\alpha/2}(\xi) \big| \xi }} \nonumber\\
    \geq{}& 1 - \frac{1}{\pi_0\prn[\big]{ L_{\alpha/2}(\xi) \big| \xi }} \cdot \exp(- \alpha \varLambda / 2), \label{eq:3-1-sampling_ideal-pi_bound}
  \end{align}
  where the second inequality is established by the fact that $L_{\alpha / 2}(\xi) \subseteq L_{\alpha}(\xi)$; the third inequality is due to the definition of $\alpha$-level sets such that
  \begin{equation*}
    f(x;\xi) \leq \fs(\xi) + \alpha,~ \forall x \in L_{\alpha}(\xi);
  \end{equation*}
  and the fourth inequality is because $\pi_0\prn[\big]{ \X(\xi) \setminus L_{\alpha}(\xi) \big| \xi } \leq 1$. Note that, by \Cref{thm:2-2-level_set_lemma}, we have a lower bound on the size of the level sets under \Cref{assum:2-1-minimizers} and \ref{assum:2-2-Lipschitz}, i.e.,
  \begin{equation*}
    \pi_0\prn[\big]{ L_{\alpha/2}(\xi) \big| \xi }
    \geq C_0 \cdot \prn*{\frac{\alpha}{2}}^{\dx}.
  \end{equation*}
  Plugging the lower bound back into \eqref{eq:3-1-sampling_ideal-pi_bound}, we have
  \begin{equation}
    \pistar\prn[\big]{ L_{\alpha}(\xi) \big| \xi }
    \geq 1 - \frac{1}{C_0} \prn*{\frac{2}{\alpha}}^{\dx} \cdot \exp(-\alpha \varLambda / 2).
  \end{equation}
  Therefore, if we select $\varLambda$ such that
  \begin{equation*}
    \frac{1}{C_0} \prn*{\frac{2}{\alpha}}^{\dx} \cdot \exp(-\alpha \varLambda / 2) < \delta,
  \end{equation*}
  or equivalently,
  \begin{equation*}
    \varLambda > \frac{2}{\alpha} \prn*{ \dx \log\prn*{\frac{2}{\alpha}} + \log\prn*{\frac{1}{C_0 \delta}} } = O\prn*{\frac{\dx}{\alpha} \log\prn*{ \frac{1}{\alpha \delta}}},
  \end{equation*}
  we shall then guarantee that
  \begin{equation*}
    \pistar\prn[\big]{ L_{\alpha}(\xi) \big| \xi } > 1-\delta.
  \end{equation*}
  The proof is done since the above holds for all $\xi \in \varXi$.
\end{proof}

\subsection{Proof of \texorpdfstring{\Cref{thm:3-2-sampling_policy_iterative}}{Theorem 3}: Concentration of the Iterative Update}\label{sec:apdx-proof_iterative_update}

The proof consists of three main steps:
\begin{itemize}
  \item \textbf{Step 1}: Define auxiliary distributions $\rho_1 = \pistar_H, \ldots, \rho_H = \pi_H$, such that they represent the distributions with approximation errors injected at different steps, and bound the difference $\abs*{ \rho_h\prn[\big]{ L_{\alpha}(\xi) \big| \xi} - \rho_{h-1}\prn[\big]{ L_{\alpha}(\xi) \big| \xi} }$ by $\dTV\prn[\big]{ \pi_h(\cdot | \xi), \nu_h(\cdot | \xi) }$ to accumulate the step-wise approximation errors.

  \item \textbf{Step 2}: Relate $\nu_h\prn[\big]{ L_{\alpha}(\xi) \big| \xi}$ to $\pi_0\prn[\big]{ L_{\alpha}(\xi) \big| \xi}$ to show a lower bound on $\nu_h\prn[\big]{ L_{\alpha}(\xi) \big| \xi}$.

  \item \textbf{Step 3}: Lower bound $\pi_0\prn[\big]{ L_{\alpha/2}(\xi) \big| \xi }$ using \Cref{thm:2-2-level_set_lemma}.
\end{itemize}
To facilitate the analysis, we define a sequence of auxiliary measures as follows:
  \begin{align*}
    \rho_0(x | \xi) &\propto \pi_0(x | \xi) \exp\prn[\big]{ -\lambda H f(x; \xi) }, \\
    \rho_1(x | \xi) &\propto \pi_1(x | \xi) \exp\prn[\big]{ -\lambda (H-1) f(x; \xi) }, \\
    \vdots \\
    \rho_H(x | \xi) &\propto \pi_H(x | \xi).
  \end{align*}
Now, in Step 1, we show a lemma regarding the properties of auxiliary distributions $\rho_h$ and $\rho_{h-1}$.

\begin{lemma}\label{thm:3-2-proof_lemmas-stepwise_error}
  Under the assumptions of \Cref{thm:3-2-sampling_policy_iterative}, we have
  \begin{equation*}
    \abs*{ \rho_h\prn[\big]{ L_{\alpha}(\xi) \big| \xi} - \rho_{h-1}\prn[\big]{ L_{\alpha}(\xi) \big| \xi} }
    \leq \frac{2 \exp(\lambda\alpha (H-h))}{\nu_h\prn[\big]{ L_{\alpha}(\xi) \big| \xi}} \cdot \eps_h
  \end{equation*}
  for any $\xi \in \varXi$.
\end{lemma}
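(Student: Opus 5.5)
The idea is to write both $\rho_{h-1}$ and $\rho_h$ as the \emph{same} exponential tilt applied to two different base measures, $\nu_h$ and $\pi_h$. The stepwise learning error $\dTV\prn[\big]{\pi_h(\cdot|\xi),\nu_h(\cdot|\xi)}<\eps_h$ then transfers directly to the tilted measures.

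\textbf{Step 1: common tilt.} Fix $\xi$ and set $m := H-h$. Since $\nu_h \propto \pi_{h-1} e^{-\lambda f}$, we have
\begin{equation*}
  \rho_{h-1}(x|\xi) \propto \pi_{h-1}(x|\xi)\, e^{-\lambda(m+1) f(x;\xi)} \propto \nu_h(x|\xi)\, e^{-\lambda m f(x;\xi)},
\end{equation*}
while $\rho_h(x|\xi) \propto \pi_h(x|\xi)\, e^{-\lambda m f(x;\xi)}$. Multiplying by the constant $e^{\lambda m \fs(\xi)}$ does not change these measures, so we may use the weight $g(x) := \exp\prn[\big]{-\lambda m (f(x;\xi)-\fs(\xi))}$. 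This weight takes values in $(0,1]$ on $\X(\xi)$.

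\textbf{Step 2: difference of ratios.} Write
\begin{equation*}
  A := \int_{L_\alpha(\xi)} g \,\diff\pi_h,\quad B := \int g\,\diff\pi_h,\quad A' := \int_{L_\alpha(\xi)} g\,\diff\nu_h,\quad B' := \int g\,\diff\nu_h .
\end{equation*}
Then $\rho_h(L_\alpha|\xi) = A/B$ and $\rho_{h-1}(L_\alpha|\xi) = A'/B'$.

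The test functions $g\mathbf{1}_{L_\alpha}$ and $g$ both take values in $[0,1]$. For any $\varphi\in[0,1]$ we have $\abs{\int\varphi\,\diff(\pi_h-\nu_h)} \le \dTV(\pi_h,\nu_h)$, using $\dTV = \sup_S|p(S)-q(S)|$. Hence $|A-A'|<\eps_h$ and $|B-B'|<\eps_h$.

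Using $A\le B$, we get
\begin{equation*}
  \abs*{\frac{A}{B}-\frac{A'}{B'}} \le \frac{|A-A'|}{B'} + \frac{A}{B}\cdot\frac{|B-B'|}{B'} \le \frac{2\eps_h}{B'} .
\end{equation*}
The denominator is deliberately placed on the $\nu_h$ side, so that the final bound involves $\nu_h(L_\alpha|\xi)$.

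\textbf{Step 3: lower bound on $B'$.} On $L_\alpha(\xi)$ we have $f-\fs\le\alpha$, so $g\ge e^{-\lambda m\alpha}$ there. Therefore
\begin{equation*}
  B' \ge \int_{L_\alpha(\xi)} g\,\diff\nu_h \ge e^{-\lambda\alpha(H-h)}\,\nu_h\prn[\big]{L_\alpha(\xi)\big|\xi}.
\end{equation*}
Substituting this into Step 2 gives exactly the claimed bound, uniformly in $\xi$.

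The main point needing care is Step 2. One must arrange the ratio inequality so that only the $\nu_h$-side denominator appears, and check that the tilt weight has been normalized to lie in $[0,1]$ so that the TV bound applies. This normalization is what the shift by $\fs(\xi)$ provides.
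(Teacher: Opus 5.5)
Your proof is correct and follows essentially the same route as the paper. Like the paper, you rewrite $\rho_{h-1}$ as the $\exp\prn[\big]{-\lambda(H-h)f}$-tilt of $\nu_h$, split the difference of ratios so that only the $\nu_h$-side denominator appears (using $A\le B$), and lower bound that denominator by restricting to $L_{\alpha}(\xi)$. Your one refinement, normalizing the weight by $\fs(\xi)$ so it lies in $[0,1]$ and using the test-function form of total variation, is cleaner than the paper's step of pulling $\exp\prn[\big]{-\lambda(H-h)\fs(\xi)}$ outside the absolute value of a signed integral, and it works under the standard convention $\dTV=\sup_S\abs{p(S)-q(S)}$.
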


\begin{proof}
  Recall that we define the distribution $\nu_h$ such that
  \begin{equation*}
    \nu_h(x | \xi) \propto \pi_{h-1}(x | \xi) \exp\prn[\big]{ -\lambda f(x; \xi) },
  \end{equation*}
  and the definition of the learning error $\eps_h$ gives
  \begin{equation*}
    \dTV\prn[\big]{ \pi_h(\cdot | \xi), \nu_h(\cdot | \xi) } < \eps_h,\quad
    \forall \xi \in \varXi.
  \end{equation*}
   
  \newpage
  Then, for any $h = 1,\ldots,H$, we have
  
  \begin{strip}
  \vspace{-6mm}
  \begin{align*}
    &\abs*{ \rho_h\prn[\big]{ L_{\alpha}(\xi) \big| \xi} - \rho_{h-1}\prn[\big]{ L_{\alpha}(\xi) \big| \xi} } \\
    ={}& \abs*{\frac{\int_{L_{\alpha}(\xi)} \pi_h(x | \xi) \exp\prn[\big]{-\lambda (H-h) f(x; \xi)} \diff x}{\int_{\X(\xi)} \pi_h(x | \xi) \exp\prn[\big]{-\lambda (H-h) f(x; \xi)} \diff x} - \frac{\int_{L_{\alpha}(\xi)} \pi_{h-1}(x | \xi) \exp\prn[\big]{ -\lambda (H-h+1) f(x; \xi) } \diff x}{\int_{\X(\xi)} \pi_{h-1}(x | \xi) \exp\prn[\big]{ -\lambda (H-h+1) f(x; \xi) } \diff x}} \\
    ={}& \abs*{\frac{\int_{L_{\alpha}(\xi)} \pi_h(x | \xi) \exp\prn[\big]{-\lambda (H-h) f(x; \xi)} \diff x}{\int_{\X(\xi)} \pi_h(x | \xi) \exp\prn[\big]{-\lambda (H-h) f(x; \xi)} \diff x} - \frac{\int_{L_{\alpha}(\xi)} \nu_h(x | \xi) \exp\prn[\big]{-\lambda (H-h) f(x; \xi)} \diff x}{\int_{\X(\xi)} \nu_h(x | \xi) \exp\prn[\big]{-\lambda (H-h) f(x; \xi)} \diff x}} \\
    \leq{}& \abs*{\frac{\int_{L_{\alpha}(\xi)} \pi_h(x | \xi) \exp\prn[\big]{-\lambda (H-h) f(x; \xi)} \diff x}{\int_{\X(\xi)} \pi_h(x | \xi) \exp\prn[\big]{-\lambda (H-h) f(x; \xi)} \diff x} - \frac{\int_{L_{\alpha}(\xi)} \pi_h(x | \xi) \exp\prn[\big]{-\lambda (H-h) f(x; \xi)} \diff x}{\int_{\X(\xi)} \nu_h(x | \xi) \exp\prn[\big]{-\lambda (H-h) f(x; \xi)} \diff x}} \\
    &\hspace{1em} {}+ \abs*{\frac{\int_{L_{\alpha}(\xi)} \pi_h(x | \xi) \exp\prn[\big]{-\lambda (H-h) f(x; \xi)} \diff x}{\int_{\X(\xi)} \nu_h(x | \xi) \exp\prn[\big]{-\lambda (H-h) f(x; \xi)} \diff x} - \frac{\int_{L_{\alpha}(\xi)} \nu_h(x | \xi) \exp\prn[\big]{-\lambda (H-h) f(x; \xi)} \diff x}{\int_{\X(\xi)} \nu_h(x | \xi) \exp\prn[\big]{-\lambda (H-h) f(x; \xi)} \diff x}} \\
    ={}& \frac{ \abs[\big]{\int_{L_{\alpha}(\xi)} \pi_h(x | \xi) \exp\prn[\big]{-\lambda (H-h) f(x; \xi)} \diff x} }{ \abs[\big]{\int_{\X(\xi)} \pi_h(x | \xi) \exp\prn[\big]{-\lambda (H-h) f(x; \xi)} \diff x} } \cdot \frac{ \abs[\big]{\int_{\X(\xi)} \prn[\big]{ \pi_h(x | \xi) - \nu_h(x | \xi) } \exp\prn[\big]{-\lambda (H-h) f(x; \xi)} \diff x} }{ \abs*{\int_{\X(\xi)} \nu_h(x | \xi) \exp\prn[\big]{-\lambda (H-h) f(x; \xi)} \diff x} } \\
    &\hspace{1em} {}+ \frac{ \abs[\big]{\int_{L_{\alpha}(\xi)} \prn[\big]{ \pi_h(x | \xi) - \nu_h(x | \xi) } \exp\prn[\big]{-\lambda (H-h) f(x; \xi)} \diff x} }{ \abs[\big]{\int_{\X(\xi)} \nu_h(x | \xi) \exp\prn[\big]{-\lambda (H-h) f(x; \xi)} \diff x} } \\
    \leq{}& \frac{ \abs[\big]{\int_{\X(\xi)} \prn[\big]{ \pi_h(x | \xi) - \nu_h(x | \xi) } \exp\prn[\big]{-\lambda (H-h) f(x; \xi)} \diff x} }{ \abs*{\int_{\X(\xi)} \nu_h(x | \xi) \exp\prn[\big]{-\lambda (H-h) f(x; \xi)} \diff x} } + \frac{ \abs[\big]{\int_{L_{\alpha}(\xi)} \prn[\big]{ \pi_h(x | \xi) - \nu_h(x | \xi) } \exp\prn[\big]{-\lambda (H-h) f(x; \xi)} \diff x} }{ \abs[\big]{\int_{\X(\xi)} \nu_h(x | \xi) \exp\prn[\big]{-\lambda (H-h) f(x; \xi)} \diff x} }.
  \end{align*}
  \vspace{-4mm}
  \end{strip}
  
  \noindent Now by definition of $\dTV(\cdot, \cdot)$, we have
  \begin{align}
    \abs[\big]{\pi_h(S | \xi) - \nu_h(S | \xi)}
    &\leq \abs*{\int_{S} \prn[\big]{\pi_h(x | \xi) - \nu_h(x | \xi)} \diff x} \label{eq:3-2-sampling_iterative-dTV}\\
    &\leq \int_{S} \abs[\big]{\pi_h(x | \xi) - \nu_h(x | \xi)} \diff x 
    \leq \eps_h \nonumber
  \end{align}
  for any measurable subset $S$ of $\X(\xi)$. Therefore, for any $S \in \set{L_{\alpha}(\xi), \X(\xi)}$,
  \begin{align*}
    &\abs*{\int_{S} \prn[\big]{ \pi_h(x | \xi) - \nu_h(x | \xi) } \exp\prn[\big]{ -\lambda (H-h) f(x; \xi) } \diff x} \\
    \leq{}& \exp\prn[\big]{ -\lambda(H-h) \fs(\xi) } \cdot \abs*{\int_{S} \prn[\big]{ \pi_h(x | \xi) - \nu_h(x | \xi) } \diff x} \\
    \leq{}& \exp\prn[\big]{ -\lambda(H-h) \fs(\xi) } \cdot \eps_h.
  \end{align*}
  Meanwhile, we also have
  \begin{align*}
    &\abs*{\int_{\X(\xi)} \nu_h(x | \xi) \exp\prn[\big]{ -\lambda (H-h) f(x; \xi) } \diff x} \\
    \geq{}& \abs*{\int_{L_{\alpha}(\xi)} \nu_h(x | \xi) \exp\prn[\big]{ -\lambda (H-h) f(x; \xi) } \diff x} \\
    \geq{}& \exp\prn[\big]{ -\lambda(H-h) (\fs(\xi)+\alpha) } \cdot \abs*{\int_{L_{\alpha}(\xi)} \nu_h(x | \xi) \diff x} \\
    ={}& \exp\prn[\big]{ -\lambda(H-h) (\fs(\xi)+\alpha) } \cdot \nu_h\prn[\big]{ L_{\alpha}(\xi) \big| \xi}.
  \end{align*}
  Plug the above two inequalities back, and we conclude that
  \begin{align*}
    &\abs*{ \rho_h\prn[\big]{ L_{\alpha}(\xi) \big| \xi} - \rho_{h-1}\prn[\big]{ L_{\alpha}(\xi) \big| \xi} } \\
    \leq{}& \frac{2 \cdot \exp\prn[\big]{-\lambda(H-h) \fs(\xi)} \cdot \eps_h}{\exp\prn[\big]{-\lambda(H-h) (\fs(\xi)+\alpha)} \cdot \nu_h\prn[\big]{L_{\alpha}(\xi) \big| \xi}} \\
    ={}& \frac{2 \exp(\lambda\alpha (H-h))}{\nu_h\prn[\big]{ L_{\alpha}(\xi) \big| \xi}} \cdot \eps_h.
  \end{align*}
  This completes the proof.
\end{proof}

We proceed to Step 2 by showing the following lower bound on $\nu_h\prn[\big]{ L_{\alpha}(\xi) \big| \xi}$.

\begin{lemma}\label{thm:3-2-proof_lemmas-nu_bound}
  Under the assumptions of \Cref{thm:3-2-sampling_policy_iterative}, we have
  \begin{equation*}
    \nu_h\prn[\big]{ L_{\alpha}(\xi) \big| \xi}
    \geq \pi_0\prn[\big]{ L_{\alpha}(\xi) \big| \xi} - \sum_{\ell=1}^{h-1} \eps_{\ell},~ \forall \xi \in \varXi.
  \end{equation*}
\end{lemma}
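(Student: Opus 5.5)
The plan is to prove the bound by induction on $h$, alternating two facts: exact reweighting by $\exp(-\lambda f)$ never decreases the mass of a sublevel set, and the learning step loses at most $\eps_h$ of that mass in total variation. Fix $\xi\in\varXi$ and write $L := L_{\alpha}(\xi)$.

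\textbf{Monotonicity of tilting.} For any distribution $p$ on $\X(\xi)$, let $\tilde p(x) \propto p(x) \exp\prn[\big]{-\lambda f(x;\xi)}$. We claim $\tilde p(L) \geq p(L)$. Set $w(x) := \exp\prn[\big]{-\lambda f(x;\xi)}$ and $c := \exp\prn[\big]{-\lambda(\fs(\xi)+\alpha)}$. By the definition of $L$, $w \geq c$ on $L$ and $w < c$ on $\X(\xi)\setminus L$. Decomposing,
\begin{equation*}
  \E[p]{\mathbf{1}_L w} - p(L)\,\E[p]{w} = p(L)\bigl(1-p(L)\bigr)\Bigl(\E[p]{w \mid L} - \E[p]{w \mid L^c}\Bigr) \geq 0 ,
\end{equation*}
where the bracket is nonnegative because $\E[p]{w\mid L}\geq c\geq \E[p]{w\mid L^c}$. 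Degenerate cases with $p(L)\in\{0,1\}$ are trivial. Dividing by $\E[p]{w}>0$ gives $\tilde p(L) = \E[p]{\mathbf{1}_L w}/\E[p]{w} \geq p(L)$. This is just the positive correlation of two functions that are both decreasing in $f$.

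\textbf{Inductive chain.} Applying the claim with $p = \pi_{h-1}(\cdot\mid\xi)$ gives $\nu_h(L\mid\xi) \geq \pi_{h-1}(L\mid\xi)$. For $h-1\geq 1$, the learning guarantee $\dTV(\pi_{h-1},\nu_{h-1})<\eps_{h-1}$, in the form \eqref{eq:3-2-sampling_iterative-dTV} with $S=L$, gives $\pi_{h-1}(L\mid\xi) \geq \nu_{h-1}(L\mid\xi) - \eps_{h-1}$. Chaining these two inequalities downward yields
\begin{equation*}
  \nu_h(L\mid\xi) \geq \nu_{h-1}(L\mid\xi) - \eps_{h-1} \geq \cdots \geq \nu_1(L\mid\xi) - \sum_{\ell=1}^{h-1}\eps_\ell .
\end{equation*}
The base case is $\nu_1(L\mid\xi) \geq \pi_0(L\mid\xi)$, which is the tilting claim applied to $\pi_0$. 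Since $\xi$ was arbitrary, the bound holds for all $\xi\in\varXi$.

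\textbf{Main obstacle.} The only nontrivial step is the monotonicity-of-tilting claim. Once it is written as the covariance identity above, it follows immediately from the threshold property of $L_{\alpha}(\xi)$. Everything else is bookkeeping of the total-variation errors.
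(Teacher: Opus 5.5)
Your proof is correct and takes essentially the paper's route. The paper also proves $\nu_h(L_{\alpha}(\xi)\mid\xi)\geq\pi_{h-1}(L_{\alpha}(\xi)\mid\xi)$ by comparing $\exp(-\lambda f)$ with the threshold $\exp(-\lambda(\fs(\xi)+\alpha))$ on $L_{\alpha}(\xi)$ and on its complement; it phrases this as a bound on the ratio of tilted masses rather than your covariance identity, then applies the total-variation bound and iterates down to $\pi_0$ (your handling of the degenerate case $\pi_{h-1}(L_{\alpha}(\xi)\mid\xi)=0$, where the paper's ratio divides by zero, is a small improvement).
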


\begin{proof}
  Note that, by definition of the distribution $\nu_h$, we have
  \begin{align*}
    \nu_h\prn[\big]{ L_{\alpha}(\xi) \big| \xi} &= \int_{L_{\alpha}(\xi)} \nu_h(x | \xi) \diff x \\
    &= \frac{\int_{L_{\alpha}(\xi)} \pi_{h-1}(x | \xi) \exp\prn[\big]{ -\lambda f(x; \xi) } \diff x}{\int_{\X(\xi)} \pi_{h-1}(x | \xi) \exp\prn[\big]{ -\lambda f(x; \xi) } \diff x} \\
    &= \frac{1}{ 1 + \frac{\int_{\X(\xi) \setminus L_{\alpha}(\xi)} \pi_{h-1}(x | \xi) \exp(-\lambda f(x; \xi)) \diff x}{\int_{L_{\alpha}(\xi)} \pi_{h-1}(x | \xi) \exp(-\lambda f(x; \xi)) \diff x} } \\
    &\geq \frac{1}{ 1 + \frac{\exp(-\lambda (\fs(\xi)+\alpha)) \cdot \pi_{h-1}(\X(\xi) \setminus L_{\alpha}(\xi) | \xi)}{\exp(-\lambda (\fs(\xi)+\alpha)) \cdot \pi_{h-1}(L_{\alpha}(\xi) | \xi)} } \\
    &= \frac{\pi_{h-1}\prn[\big]{ L_{\alpha}(\xi) \big| \xi}}{\pi_{h-1}\prn[\big]{L_{\alpha}(\xi) \big| \xi} + \pi_{h-1}\prn[\big]{ \X(\xi) \setminus L_{\alpha}(\xi) | \xi}} \\
    &= \pi_{h-1}\prn[\big]{ L_{\alpha}(\xi) \big| \xi} \\
    &\geq \nu_{h-1}\prn[\big]{ L_{\alpha}(\xi) \big| \xi } - \eps_{h-1},
  \end{align*}
  where the first inequality is by definition of the level set $L_{\alpha}(\xi)$, and the second inequality is an application of \eqref{eq:3-2-sampling_iterative-dTV}.
  Since $\nu_{h-1}$ appears on the right-hand side, we shall iteratively apply the above inequality to obtain the lower bound
  \begin{equation*}
    \nu_h\prn[\big]{ L_{\alpha}(\xi) \big| \xi}
    \geq \pi_0\prn[\big]{ L_{\alpha}(\xi) \big| \xi} - \sum_{\ell=1}^{h-1} \eps_{\ell}.
  \end{equation*}
  This completes the proof.
\end{proof}

Now we are ready to finish the proof by collecting the bounds shown above.

\begin{proof}[Proof of \Cref{thm:3-2-sampling_policy_iterative}]
  To apply \Cref{thm:3-2-proof_lemmas-stepwise_error}, note that by telescoping we have
  \begin{align*}
    &\pi_H\prn[\big]{ L_{\alpha}(\xi) \big| \xi } \\
    ={}& \rho_0\prn[\big]{ L_{\alpha}(\xi) \big| \xi } + \sum_{h=1}^{H} \prn[\Big]{ \rho_h\prn[\big]{ L_{\alpha}(\xi) \big| \xi } - \rho_{h-1}\prn[\big]{ L_{\alpha}(\xi) \big| \xi } } \\
    \geq{}& \rho_0\prn[\big]{ L_{\alpha}(\xi) \big| \xi } - \sum_{h=1}^{H} \abs[\Big]{ \rho_h\prn[\big]{ L_{\alpha}(\xi) \big| \xi } - \rho_{h-1}\prn[\big]{ L_{\alpha}(\xi) \big| \xi } } \\
    \geq{}& \rho_0\prn[\big]{ L_{\alpha}(\xi) \big| \xi } - \sum_{h=1}^{H} \frac{2 \exp\prn[\big]{ \lambda\alpha (H-h) }}{\nu_h\prn[\big]{ L_{\alpha}(\xi) \big| \xi}} \cdot \eps_h.
  \end{align*}
  To further lower bound $\nu_h\prn[\big]{ L_{\alpha}(\xi) \big| \xi}$, we shall choose $\set{\eps_h}$ (specified later) such that
  \begin{equation}\label{eq:3-2-sampling_iterative-sum_eps}
    \sum_{\ell=1}^{H} \eps_{\ell} < \frac{C_0}{2} \cdot \alpha^{d_x},
  \end{equation}
  and hence, by \Cref{thm:2-2-level_set_lemma},
  \begin{equation*}
    \sum_{\ell=1}^{h-1} \eps_{\ell}
    \leq \sum_{\ell=1}^{H} \eps_{\ell}
    < \frac{C_0}{2} \cdot \alpha^{d_x}
    \leq \frac{1}{2} \pi_0\prn[\big]{ L_{\alpha}(\xi) \big| \xi }.
  \end{equation*}
  Consequently, we shall apply \Cref{thm:3-2-proof_lemmas-nu_bound} to obtain
  \begin{equation*}
    \nu_h\prn[\big]{ L_{\alpha}(\xi) \big| \xi }
    \geq \frac{1}{2} \pi_0\prn[\big]{ L_{\alpha}(\xi) \big| \xi }
    > \frac{C_0}{2} \cdot \alpha^{d_x},
  \end{equation*}
  for any $h \in [H]$ and $\xi \in \varXi$. Plugging the above inequality into the telescoping bound, we have
  \begin{alignfit}
    \pi_H\prn[\big]{ L_{\alpha}(\xi) \big| \xi }
    > \pistar_H\prn[\big]{ L_{\alpha}(\xi) \big| \xi } - \frac{4}{C_0 \cdot \alpha^{\dx}} \sum_{h=1}^{H} \exp\prn[\big]{ \lambda\alpha (H-h) } \eps_h,
  \end{alignfit}
  where we use the fact that $\rho_0 = \pistar_H$. We proceed by bounding the two terms separately. For the first term, note that by \Cref{thm:3-1-sampling_policy_ideal} (where we take $\varLambda = \lambda H$), when we choose sufficiently large $H \in \Z_+$ such that 
  \begin{equation}\label{eq:3-2-sampling_iterative-T_bound}
    H > \frac{2}{\lambda \alpha} \prn*{ \dx \log\prn*{\frac{2}{\alpha}} + \log\prn*{\frac{2}{C_0 \delta}} },
  \end{equation}
  we can guarantee that
  \begin{equation*}
    \pistar_H\prn[\big]{ L_{\alpha}(\xi) \big| \xi } > 1 - \delta/2.
  \end{equation*}
  For the second term, when we choose $\set{\eps_h}$ to be sufficiently small, such that
  \begin{equation}\label{eq:3-2-sampling_iterative-eps_bound}
    \eps_h < \frac{C_0 \cdot \alpha^{\dx}}{8H} \exp\prn[\big]{ -\lambda\alpha (H-h) } \delta,~ \forall h \in [H],
  \end{equation}
  we can guarantee that
  \begin{equation*}
    \frac{4}{C_0 \cdot \alpha^{\dx}} \sum_{h=1}^{H} \exp(\lambda\alpha (H-h)) \eps_h
    \leq \delta / 2.
  \end{equation*}
  Recall that we also need to ensure \eqref{eq:3-2-sampling_iterative-sum_eps}, which requires $\delta$ to be sufficiently small. In fact, we may set
  \begin{equation}\label{eq:3-2-sampling_iterative-delta_bound}
    \delta < 4H\prn[\big]{ 1 - \exp(-\lambda\alpha) }
  \end{equation}
  to guarantee that
  \begin{align*}
    \sum_{\ell=1}^{H} \eps_{\ell}
    &< \frac{\delta C_0 \cdot \alpha^{\dx}}{8H} \sum_{h=0}^{H-1} \exp(-\lambda\alpha h) \\
    &< \frac{C_0 \cdot \alpha^{\dx}}{8H \prn[\big]{1-\exp(-\lambda\alpha)}} \cdot \delta
    < \frac{C_0}{2} \cdot \alpha^{\dx}.
  \end{align*}
  As a conclusion, when we take $T$, $\eps_h$ and $\delta$ as follows (see \eqref{eq:3-2-sampling_iterative-T_bound}, \eqref{eq:3-2-sampling_iterative-eps_bound} and \eqref{eq:3-2-sampling_iterative-delta_bound}):
  \begin{align*}
    \delta &< 4H\prn[\big]{ 1 - \exp(-\lambda\alpha) },\\
    H &> \frac{2}{\lambda \alpha} \prn*{ \dx \log\prn*{\frac{2}{\alpha}} + \log\prn*{\frac{2}{C_0 \delta}} } = O\prn*{\frac{\dx}{\alpha} \log\prn*{ \frac{1}{\alpha \delta}}},\\
    \eps_h &< \frac{C_0 \cdot \alpha^{\dx}}{8H} \exp\prn[\big]{ -\lambda\alpha (H-h) } \delta,~ \forall h \in [H],
  \end{align*}
  we can show that $\pi_H\prn[\big]{ L_{\alpha}(\xi) \big| \xi } > 1-\delta$, $\forall \xi \in \varXi$.
\end{proof}

\begin{remark}
  The current bound is pessimistic in the choice of $\set{\eps_h}$, in the sense that we assume the smallest possible $\alpha$-level sets under the assumptions. When $\pi_0\prn[\big]{ L_{\alpha}(\xi) \big| \xi}$ is significantly bigger than $C_0 \cdot \alpha^{\dx}$ (e.g., when there are multiple minimizers), we can obtain a much looser bound on $\eps_h$.
\end{remark}

\subsection{Sample Complexity of D-IMPL}\label{sec:apdx-proof_D_IMPL}
In this section, we present the proof of \Cref{thm:main_complexity-D_IMPL}. For clarity, we first supplement some technical details regarding the implementation of DDPMs and the learning setting, and then proceed to the proof outline. Detailed proofs will be shown in accordance with the outline.

\subsubsection{Technical Preliminaries}\label{sec:apdx-proof_D_IMPL_technical}

We first supplement some necessary technical details regarding DDPM and the notations used in the analysis.

\textbf{Ornstein-Unlenbeck Noise Schedule.} For the sake of simplified theoretical analysis, we consider the denoising schedule $\bar{\alpha}_k = \exp(-2k)$, i.e.,
\begin{equation*}
  x_k = \exp(-k) x_0 + \sqrt{1 - \exp(-2k)} z_k,~ \text{where}~z_k \sim \N(0, I),
\end{equation*}
which is a discretized version of the following \emph{Ornstein-Unlenbeck process}: \cite{oksendal2003stochastic}
\begin{equation*}
  \diff x_t = - x_t \diff t + \sqrt{2} \diff B_t.
\end{equation*}
It is known in literature that that SDEs can always be reversed \cite{anderson1982reverse}, and the reversed Ornstein-Unlenbeck process $\set{\bar{x}_t}$ follows the following SDE:
\begin{equation*}
  \diff \bar{x}_t = -\prn[\big]{ \bar{x}_t + 2\nabla \log q_t(\bar{x}_t) } \diff t + \sqrt{2} \diff \bar{B}_t,
\end{equation*}
where $\bar{B}_t$ denotes the reverse-time Brownian motion. It is clear that the reversed SDE motivates the backward process sampling process, where we start from $x_T \sim \N(\bm{0}, I)$ for a sufficiently large $T$ (so that the distribution at time $T$ is sufficiently close to $\N(\bm{0}, I)$), and then simulate the backward process using discretized integration method at $K$ discretization points $0 < t_0 < t_1 < \cdots < t_K = T$. Here we adopt the ``early stopping'' trick to take a sufficiently small, yet non-zero, starting time $t_0$, which has proven to show significantly improvement in terms of inference performance.

\textbf{Learning Setting.} In the analysis, we mainly work with the reweighted loss and its empirical version, namely
\begin{alignfit}
  \L_{h,k}(\theta) &= \E[\xi, \omega, x_h \sim \pi_h(\cdot | \xi)]{Z(\xi) \cdot \norm[\big]{s^{\theta}(x_h; \xi) - \ssc_{h,k}(x_h; \xi)}^2}, \\
  \hL_{h,k}(\theta) &= \dfrac{1}{N_h} \sum\limits_{i=1}^{N_h} \exp\prn[\big]{ -\lambda f(x_i; \xi) } \cdot \norm*{s^{\theta}(x_i; \xi_i) - (-\omega_i / \sigma_k)}^2,
\end{alignfit}
where the expectation is with respect to $\xi \sim \rho$, $\omega \sim \N(\bm{0}, I)$; $\ssc_{h,k}(x; \xi) = -\omega / \sigma_k$, and the weights $Z(\xi)$ are defined in \eqref{eq:4-2-reweighted_objective_normalizing}; the dataset $\D_h = \set{(x_i, \xi_i, \omega_i) \mid i \in [N_h]}$ is sampled accordingly. For the sake of analysis, we also define the normalized loss $\tL_{h,k}(\theta) := \L_{h,k}(\theta) / Z_{\min}$, i.e.,
\begin{alignfit}
  \tL_{h,k}(\theta) = \E[\xi, \omega, x_h \sim \pi_h(\cdot | \xi)]{g(\xi) \cdot \norm[\big]{s^{\theta}(x_h; \xi) - \ssc_{h,k}(x_h; \xi)}^2},
\end{alignfit}
where the normalized weights $g(\xi) := Z(\xi) / Z_{\min}$, such that $g(\xi) \in [1, \gamma]$ as shown in \Cref{thm:4-1-lemma_weights}. 
For notational simplicity, omit the subscript $h$ as we consider a specific $h$\tsup{th} iteration, and write $\hstheta_k$ and $\tstheta_k$ for the minimizers of $\hL_k(\theta)$ and $\tL_k(\theta)$ (or equivalently $\L_k(\theta)$), respectively; i.e.,
\begin{equation*}
  \tstheta_k := \arg\min_{\theta \in \varTheta} \tL_k(\theta),\quad
  \hstheta_k := \arg\min_{\theta \in \varTheta} \hL_k(\theta).
\end{equation*}
We also use the shorthand notations $\hssc_k := s^{\hstheta_k}$ and $\tssc_k := s^{\tstheta_k}$, and overload the notations $\tL_k(s^{\theta}) := \tL_k(\theta)$ and $\hL_k(s^{\theta}) := \hL_k(\theta)$ for clarity. Further, let $\hq_k$ denote the distribution recovered by DDPM using the learned score function $\hssc_k$.

\medskip
\textbf{Proof Structure.} For clarity, the proof is organized in the following three steps, which is consistent with the proof sketch in the main text:
\begin{itemize}
  \item \textbf{Step 1}: We first apply a known result in literature relating two key quantities in the proof: the total variation distance between $\hq_0(\cdot | \xi)$ (the distribution recovered by diffusion process) and $q_0(\cdot | \xi)$ (the target distribution), and the mean square error $E_k(\xi) := \E[x \sim q_k(\cdot | \xi)]{\norm{\hsc_k(x; \xi) - \ssc_k(x; \xi)}^2}$.
  
  \item \textbf{Step 2}: We further relate $E_k(\xi)$ to the loss functions, and derive a PAC-learning-style upper bound for the loss.
  
  \item \textbf{Step 3}: Finally, we conclude the proof of the theorem and a corollary relating back to the performance of D-IMPL.
\end{itemize}

\subsubsection{Relate Total Variation Distance to Mean Square Error}

By directly applying a result known in literature (Theorem 2 in \cite{gaur2025DDPM}), we can effectively bound $\dTV\prn[\big]{ q_0(\cdot | \xi), \hq_0(\cdot | \xi) }$ by $E_k(\xi)$, as summarized in the following lemma.

\begin{lemma}\label{thm:4-D-proof-lemma_dTV}
  Under the premises of \Cref{thm:main_complexity-D_IMPL}, for any $\xi \in \varXi$, let $\hq_0(\cdot | \xi)$ be the distribution recovered by DDPM using learned score functions $\set{\hsc_k}_{k=1}^{K}$, and we have
  \begin{equation*}
    \dTV\prn[\big]{ q_0(\cdot | \xi), \hq_0(\cdot | \xi) } \leq C + \frac{1}{2} \sqrt{\sum_{k=0}^{K} E_k(\xi) \cdot (t_{k+1} - t_k)},
  \end{equation*}
  where $C$ is a constant determined by diffusion settings $t_0$, $T$, $K$ and $\eps_{\F}$ as
  \begin{equation*}
    C = O\prn[\big]{\sqrt{t_0} \log(1/t_0)} + O\prn[\big]{\exp(-T)} + O\prn[\big]{1/\sqrt{K}} + \eps_{\F}.
  \end{equation*}
\end{lemma}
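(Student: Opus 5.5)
The plan is to obtain \Cref{thm:4-D-proof-lemma_dTV} as a direct specialization of Theorem 2 in \cite{gaur2025DDPM}, applied separately for each fixed parameter $\xi \in \varXi$. The work is to check that, for each fixed $\xi$, the conditional DDPM sampler in \Cref{alg:3-D_IMPL} matches the unconditional setting of that result.

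\textbf{Fixing $\xi$.} I fix $\xi$ and regard $q_0(\cdot | \xi)$ as an ordinary target distribution on $\R^{\dx}$. The forward Ornstein--Uhlenbeck corruption acts only on $x$ and never on $\xi$. Hence the noised marginals $q_k(\cdot | \xi)$, the true scores $\ssc_k(\cdot; \xi)$, and the backward DDPM chain with learned scores $\hsc_k(\cdot; \xi)$ are exactly the objects of an unconditional DDPM. The mean square errors $E_k(\xi)$ are precisely the per-step score errors appearing in that theorem.

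\textbf{Checking the hypotheses.}
\begin{enumerate}
  \item \emph{Bounded support.} By \Cref{assum:2-1-minimizers}, $q_0(\cdot | \xi)$ is supported in $\mathbb{B}(0, C_x)$. This gives the bounded second moment, or bounded support, that the cited result needs.
  \item \emph{Score class.} By \Cref{assum:4-D-diffusion-hypothesis_class}, the learned scores are bounded and Lipschitz in $\theta$. Its approximation property supplies the $\eps_{\F}$ term.
  \item \emph{Schedule.} The schedule $\bar{\alpha}_k = \exp(-2t_k)$ with $0 < t_0 < \cdots < t_K = T$ is exactly the discretized OU process treated there.
\end{enumerate}

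\textbf{Assembling the bound.} The underlying argument, which I would only invoke, runs as follows.
\begin{itemize}
  \item Bound $\dTV$ by the KL divergence between path measures via Pinsker's inequality.
  \item Use Girsanov's theorem to write that KL divergence as the time-integrated score error, $\sum_k E_k(\xi)(t_{k+1}-t_k)$ up to discretization. The factor $\tfrac{1}{2}\sqrt{\cdot}$ comes from Pinsker.
  \item Collect the remaining error terms into $C$:
  \begin{itemize}
    \item $O(\exp(-T))$ for initializing at $\N(\bm{0}, I)$ rather than $q_T$, by exponential OU mixing;
    \item $O(\sqrt{t_0}\log(1/t_0))$ for early stopping, i.e.\ the TV distance between $q_0$ and $q_{t_0}$ for a compactly supported target;
    \item $O(1/\sqrt{K})$ for time discretization;
    \item $\eps_{\F}$ for approximation.
  \end{itemize}
\end{itemize}
Since all constants depend only on $C_x$, $\dx$ and the class $\F$, and not on $\xi$, the bound holds uniformly over $\varXi$.

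\textbf{Main obstacle.} The delicate step is the hypothesis matching. The cited theorem is stated for an unconditional, unweighted training loss, while here the scores are trained with the reweighted conditional loss \eqref{eq:4-2-reweighted_DDPM_conditional}. I would isolate the sampling-error part of their theorem, which needs only the pointwise-in-$\xi$ quantities $E_k(\xi)$ and not how the scores were trained. The relation between training loss and $E_k(\xi)$ would be deferred to Step 2. I would also check that the $\eps_{\F}$ term enters additively in their statement, and that their early-stopping term does not require smoothness of $q_0$ beyond bounded support.
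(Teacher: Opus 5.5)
Your proposal is correct and takes essentially the same approach as the paper, which also applies, for each fixed $\xi$, the intermediate sampling-error estimate (eq.~(13)) of Theorem~2 in \cite{gaur2025DDPM}, with the approximation property in \Cref{assum:4-D-diffusion-hypothesis_class} supplying $\eps_{\F}$ and the optimization error set to zero. The only hypothesis the paper actually verifies is sub-Gaussianity of the data (Assumption~5 there), which follows immediately from the bounded support in \Cref{assum:2-1-minimizers}; your boundedness and Lipschitz checks on the score class are harmless but not needed at this step.
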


\begin{proof}
  To apply Theorem 2 in \cite{gaur2025DDPM} (see eq. (13) there), it only suffices to verify Assumption 5 there (Assumption 3 is the same as item (1) in \Cref{assum:4-D-diffusion-hypothesis_class}; all the other assumptions are not used in the proof of this intermediate result where the optimization error is $0$), which requires the data distribution to be sub-Gaussian. Indeed, all distributions with bounded supports are sub-Gaussian by nature. More specifically, one can easily verify that, under the boundedness assumption in \Cref{assum:2-1-minimizers},
  \begin{equation*}
    \Prob[x \sim q_0(\cdot | \xi)]{|x| \geq t}
    \left\lbrace \begin{array}{@{}l@{}}
      \leq 1 ~ (\forall t \leq C_x) \\
        =  0 ~ (\forall t > C_x)
    \end{array} \right\rbrace \leq 2 \exp\prn*{ -\frac{t^2}{(C_x/2)^2} }.
  \end{equation*}
  Hence the proof is completed.
\end{proof}

\subsubsection{Relate Mean Square Error to Score-Matching Loss}

From the above lemma, it only suffices to bound the mean square error $E_k(\xi)$. For this purpose, note that we have the following error decomposition
\begin{align*}
  E_k(\xi) 
  &= \E[x \sim q_k(\cdot | \xi)]{\norm{\hssc_k(x; \xi) - \ssc_k(x; \xi)}^2} \\
  &\leq 2\E[x \sim q_k(\cdot | \xi)]{\norm{\hssc_k(x; \xi) - \tssc_k(x; \xi)}^2} \\
  &\hspace{4em} {}+ 2\E[x \sim q_k(\cdot | \xi)]{g(\xi) \cdot \norm{\tssc_k(x; \xi) - \ssc_k(x; \xi)}^2} \\
  &\leq 2\E[x \sim q_k(\cdot | \xi)]{\norm{\hssc_k(x; \xi) - \tssc_k(x; \xi)}^2} + 2 \eps_{\F},
\end{align*}
where we use $g(\xi) \geq 1$ and $\norm{a+b}^2 \leq 2\norm{a}^2 + 2\norm{b}^2$ in the first inequality, and we plug in \Cref{assum:4-D-diffusion-hypothesis_class} in the second inequality. To facilitate the proof, we first show the following characterization of the reweighted objective.

\begin{lemma}[Reweighted objective]\label{thm:4-1-lemma_weights}
  Consider the following reweighted D-IMPL objective
  \begin{equation*}
    \tL(\theta) := \E[\xi \sim \rho, x\sim \pi_h(\cdot|\xi)]{Z(\xi) \cdot \ell_{h,k}(\theta)},
  \end{equation*}
  where $Z(\xi)$ is the (intractable) normalization factor of the $h$\tsup{th} iterative update, i.e.,
  \begin{equation}\label{eq:4-2-reweighted_objective_normalizing}
    Z(\xi) := \int_{\X(\xi)} \pi_{h-1}(x | \xi) \exp\prn[\big]{ -\lambda f(x; \xi) } \diff x.
  \end{equation}
  The reweighted objective $\tL(\theta)$ is tractable in that it can be equivalently rewritten as
  \begin{equation}
    \tL(\theta) = \E[\xi \sim \rho, x \sim \pi_{h-1}(\cdot | \xi)]{\exp\prn[\big]{ -\lambda f(x; \xi) } \cdot \ell_{h,k}(\theta)}.
  \end{equation}
  Further, assuming $\pi_h\prn[\big]{ L_{\alpha}(\xi) \big| \xi} > 1 - \delta$, $\forall \xi \in \varXi$ for some $\alpha > 0$ and $\delta \in (0,1)$, there exist positive constants $Z_{\min} < Z_{\max}$ independent of $\xi$, such that $\frac{Z_{\max}}{Z_{\min}} \leq \frac{\exp(\lambda\alpha)}{1-\delta} =: \gamma$ is constantly upper bounded (with $\gamma$ independent of $\xi$), and
  \begin{equation*}
    Z_{\min} \leq Z(\xi) \leq Z_{\max},\quad \forall \xi \in \varXi.
  \end{equation*}
\end{lemma}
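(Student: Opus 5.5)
The plan has two parts, matching the two claims of the lemma. For the rewriting, I will use the defining relation $\pi_h(x\mid\xi) = \pi_{h-1}(x\mid\xi)\exp(-\lambda f(x;\xi))/Z(\xi)$ of the exact update. The steps are:
\begin{itemize}
  \item Fix $\xi$ and change measure inside the inner expectation over $x$. Since $\omega$ is independent of $x$, the noise expectation factors out.
  \item This gives $\E[x\sim\pi_h(\cdot|\xi)]{Z(\xi)\ell_{h,k}} = \int \pi_{h-1}(x\mid\xi)\exp(-\lambda f(x;\xi))\,\ell_{h,k}\diff x$.
  \item Integrate over $\xi\sim\rho$ to obtain the tractable form.
\end{itemize}
This step is purely algebraic. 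The only conditions needed are absolute continuity, which holds because every policy is supported on $\X(\xi)$, and nonnegativity of the integrand, so that Tonelli applies.

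For the bounds on $Z(\xi)$, I would factor out the minimum value. Write $Z(\xi) = \exp(-\lambda \fs(\xi))\, W(\xi)$, where $W(\xi) := \int \pi_{h-1}(x\mid\xi)\exp(-\lambda(f(x;\xi)-\fs(\xi)))\diff x$.
\begin{itemize}
  \item \emph{Upper bound.} The exponent in $W$ is nonpositive on $\X(\xi)$, so $W(\xi)\le 1$.
  \item \emph{Lower bound.} Restrict the integral to $L_\alpha(\xi)$, where $f-\fs\le\alpha$. This gives $W(\xi)\ge e^{-\lambda\alpha}\pi_{h-1}(L_\alpha(\xi)\mid\xi)$.
  \item \emph{Using the hypothesis.} The hypothesis is on $\pi_h$, not $\pi_{h-1}$. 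In the exact-update setting, Lemma~\ref{thm:3-2-proof_lemmas-nu_bound} shows that mass on $L_\alpha$ does not decrease along the update. So I would either read the hypothesis as applying to the sampling policy $\pi_{h-1}$, or transfer it through this monotonicity. Either route yields $W(\xi)\ge e^{-\lambda\alpha}(1-\delta)$.
\end{itemize}
Hence $W(\xi)\in[e^{-\lambda\alpha}(1-\delta),\,1]$, a ratio of at most $\gamma$, uniformly in $\xi$.

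The main obstacle is the factor $\exp(-\lambda\fs(\xi))$, which varies with $\xi$. A uniform $Z_{\min},Z_{\max}$ with ratio $\gamma$ cannot exist unless this factor is neutralized. There are two ways to handle it:
\begin{itemize}
  \item Assume or normalize $\fs(\xi)$ to be constant, for example by shifting $f$ so that $\fs\equiv 0$. This shift changes neither $\pi_h$ nor the minimizer of the loss, since it multiplies the loss by a $\xi$-dependent constant. Strictly speaking, that alters the weighting across $\xi$. The cleanest fix is to absorb $e^{-\lambda \fs(\xi)}$ into the data distribution $\rho$, i.e.\ reweight $\rho$.
  \item Alternatively, let $Z_{\min},Z_{\max}$ carry the factor $\exp(-\lambda\sup\fs)$ and $\exp(-\lambda\inf\fs)$. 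This requires $\fs$ to be bounded, which follows from compactness and continuity, but the ratio would then pick up an extra factor $\exp(\lambda(\sup\fs-\inf\fs))$.
\end{itemize}
I would take the first route: state the normalization $\fs\equiv 0$ without loss of generality, then set $Z_{\max}=1$ and $Z_{\min}=e^{-\lambda\alpha}(1-\delta)$.
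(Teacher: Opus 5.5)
Your argument follows the paper's: the same change of measure for the rewriting, and the same sandwich $(1-\delta)e^{-\lambda(f^{\star}(\xi)+\alpha)}\le Z(\xi)\le e^{-\lambda f^{\star}(\xi)}$. You are also right that the paper's proof has exactly the two defects you flag. It sets $Z_{\min}=(1-\delta)\exp(-\lambda(f^{\star}+\alpha))$ and $Z_{\max}=\exp(-\lambda f^{\star})$, which still depend on $\xi$. It also bounds $\int_{L_\alpha(\xi)}\pi(x;\xi)\,dx\ge 1-\delta$ without saying which policy is being integrated.

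However, one of your repairs fails. Lemma~\ref{thm:3-2-proof_lemmas-nu_bound} gives $\pi_h(L_\alpha(\xi)\mid\xi)\ge\pi_{h-1}(L_\alpha(\xi)\mid\xi)$ for the exact update. That upper-bounds the mass of $\pi_{h-1}$ by that of $\pi_h$, which is the wrong direction, and in fact no argument can transfer the hypothesis backwards. Take $f^{\star}(\xi)=0$, and let $\pi_{h-1}(\cdot\mid\xi)$ put mass $\eta$ inside $L_\alpha(\xi)$ and the rest on $\{f\ge M\}$. Then $1-\pi_h(L_\alpha(\xi)\mid\xi)\le e^{-\lambda(M-\alpha)}/\eta<\delta$ once $\lambda(M-\alpha)$ is large. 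Yet $Z(\xi)\le\eta+e^{-\lambda M}$, which is far below $(1-\delta)e^{-\lambda\alpha}$ for small $\eta$. Now pair this $\xi$ with a $\xi'$ at which $\pi_{h-1}(\cdot\mid\xi')$ is supported inside $L_\alpha(\xi')$, so $Z(\xi')\ge e^{-\lambda\alpha}$. The ratio bound $\gamma$ then fails even though $f^{\star}\equiv 0$. So the hypothesis must be imposed on $\pi_{h-1}$; the two routes you offer are not equivalent.

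Likewise, $f^{\star}\equiv 0$ is not a normalization without loss of generality. It is an additional hypothesis (e.g.\ $f^{\star}$ constant on $\varXi$), and the lemma is false without it. Take $f(x;\xi)=\xi$ with $\varXi=[0,M]$. Then $L_\alpha(\xi)=\X(\xi)$, so every policy satisfies the hypothesis, yet $Z(\xi)=e^{-\lambda\xi}$ and $\sup_\xi Z/\inf_\xi Z=e^{\lambda M}>\gamma$ for large $M$.

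Neither of your devices rescues the statement. Subtracting $f^{\star}(\xi)$ replaces $Z$ by your $W$ and changes the weights $\exp(-\lambda f(x_i;\xi_i))$ in the empirical loss. Outside the realizable case this also changes its minimizer, and it cannot be implemented from black-box values because $f^{\star}(\xi)$ is unknown. Reweighting $\rho$ by $e^{-\lambda f^{\star}(\xi)}$ leaves $Z(\xi)$ itself unchanged, and it would require sampling $\xi$ from a distribution you cannot access.

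What your computation (and the paper's) actually proves is this: if $f^{\star}$ is constant on $\varXi$ and $\pi_{h-1}(L_\alpha(\xi)\mid\xi)\ge 1-\delta$ for all $\xi$, then $Z_{\max}=e^{-\lambda f^{\star}}$ and $Z_{\min}=(1-\delta)e^{-\lambda(f^{\star}+\alpha)}$ work. State the lemma that way, with both conditions as explicit assumptions rather than as a reinterpretation and a normalization.
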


\begin{proof}
  Note that our choice of $Z(\xi)$ ensures the following relation to hold:
  \begin{equation*}
    \pi_h(x | \xi) = \frac{\pi_{h-1}(x | \xi) \exp\prn[\big]{ -\lambda f(x;\xi) } }{ Z(\xi) }.
  \end{equation*}
  Therefore, we can directly write
  \begin{align*}
    \tL(\theta) &= \E[\xi \sim \rho, x \sim \pi_{h-1}(\cdot | \xi)]{Z(\xi) \cdot \frac{\pi_h(x | \xi)}{\pi_{h-1}(x | \xi)} \cdot \ell_{h,k}(\theta)} \\
    &= \E[\xi \sim \rho, x \sim \pi_{h-1}(\cdot | \xi)]{\exp\prn[\big]{ -\lambda f(x; \xi) } \cdot \ell_{h,k}(\theta)}, 
  \end{align*}
  which now appears in a tractable form. To show the boundedness of $Z(\xi)$, note that by assumption we have
  \begin{align*}
    Z(\xi) &\geq \exp\prn[\big]{ -\lambda(f^*(x;\xi) + \alpha) } \cdot \int_{L_{\alpha}(\xi)} \pi(x; \xi) \diff x \\
    &\geq (1-\delta) \exp\prn[\big]{ -\lambda(f^*(x;\xi) + \alpha) }
  \end{align*}
  lower bounded. On the other hand, it is evident that
  \begin{align*}
    Z(\xi) &\leq \exp\prn[\big]{-\lambda f^*(x;\xi)} \cdot \int_{\X(\xi)} \pi(x; \xi) \diff x \\
    &= \exp\prn[\big]{-\lambda f^*(x;\xi)}.
  \end{align*}
  Therefore, we shall take
  \begin{align*}
    Z_{\min} &= (1-\delta) \exp\prn[\big]{ -\lambda(f^*(x;\xi) + \alpha) },\\
    Z_{\max} &= \exp\prn[\big]{-\lambda f^*(x;\xi)},
  \end{align*}
  such that $\frac{Z_{\max}}{Z_{\min}} \leq \frac{\exp(\lambda\alpha)}{1-\delta} =: \gamma$ is constantly bounded.
\end{proof}

\begin{remark}
  We point out that, if the hypothesis family $\F$ is \emph{realizable} (i.e. $\stheta \in \F$), then $\stheta$ is still the minimizer of $\tL(\theta)$ after reweighting, which, along with the boundedness of $Z(\xi)$, justifies the choice of the reweighted objective.
\end{remark}

Now we proceed to bound the term $\norm{\hssc_k(x; \xi) - \tssc_k(x; \xi)}^2$ in the expectation, relying on the following lemma.

\begin{lemma}\label{thm:4-D-proof-lemma_Exi}
  Under the premises of Theorem \ref{thm:main_complexity-D_IMPL}, for any $\xi \in \varXi$, any $\delta \in (0,1)$ and any $\eps > 0$, when we take a sufficiently large dataset such that
  \begin{equation*}
    N_h > \frac{288 \mu^2 \gamma^2 K_{\varTheta}^2 C_{\score}^2 \log(4/\delta)}{\eps^2} = O\prn*{\frac{\log(1/\delta)}{\eps^2}},
  \end{equation*}
  where $C_{\score} := 2 C_{\F}^2 + 2 \prn[\big]{\frac{2C_x}{1 - \exp(-2t_0)}}^2$, we can guarantee
  \begin{equation*}
    E_k(\xi) < \eps + 2\eps_{\F}
  \end{equation*}
  with probability at least $1-\delta$.
\end{lemma}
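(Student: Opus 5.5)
We start from the error decomposition displayed just before \Cref{thm:4-1-lemma_weights}, which already gives
\begin{equation*}
E_k(\xi) \leq 2\,\E[x \sim q_k(\cdot | \xi)]{\norm{\hssc_k(x;\xi) - \tssc_k(x;\xi)}^2} + 2\eps_{\F}.
\end{equation*}
So it suffices to show that the estimation term is below $\eps/2$ with probability at least $1-\delta$. By the Lipschitz property in \Cref{assum:4-D-diffusion-hypothesis_class}(3), $\norm{\hssc_k - \tssc_k} \leq K_{\varTheta}\norm{\hstheta_k - \tstheta_k}$ pointwise. The task therefore reduces to a parameter-space bound of the form $\norm{\hstheta_k - \tstheta_k}^2 \lesssim \eps/K_{\varTheta}^2$. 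This bound is uniform in $x$ and $\xi$, which is what lets the statement hold for every $\xi \in \varXi$ simultaneously.

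To bound the parameter gap, I use the PL condition in \Cref{assum:4-D-diffusion-PL_loss}. PL implies quadratic growth: $\tL_k(\theta) - \tL_k(\tstheta_k) \geq \frac{\mu}{2}\norm{\theta - \tstheta_k}^2$, possibly up to the usual caveat about distance to the minimizer set. It also implies a gradient-domination inequality. Evaluating both at $\theta = \hstheta_k$, where $\nabla \hL_k(\hstheta_k) = 0$, gives
\begin{equation*}
\norm{\nabla \tL_k(\hstheta_k)} = \norm{\nabla \tL_k(\hstheta_k) - \nabla \hL_k(\hstheta_k)/Z_{\min}},
\end{equation*}
so everything hinges on a uniform deviation between the population and empirical losses, or their gradients. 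I work with the normalized loss $\tL_k$. By \Cref{thm:4-1-lemma_weights}, its weights satisfy $g(\xi) \in [1,\gamma]$, so each summand of the rescaled empirical loss is bounded by $\gamma$ times $\norm{s^{\theta} - \ssc_k}^2$. That squared norm is at most $2C_{\F}^2 + 2\norm{\ssc_k}^2$. Since the target $\ssc_k = (\sqrt{\bar\alpha_k}z_0 - z_k)/\sigma_k^2$ has $z_0$ in $\mathbb{B}(0,C_x)$ and $\sigma_k^2 \geq 1-\exp(-2t_0)$, I intend to bound $\norm{\ssc_k}$ by $2C_x/(1-\exp(-2t_0))$. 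This yields the constant $C_{\score}$.

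With bounded summands in hand, I apply Hoeffding's inequality (or McDiarmid's) to the i.i.d. samples in $\D_h$. The goal is a deviation of order $\gamma C_{\score}\sqrt{\log(4/\delta)/N_h}$, either at the two relevant parameters $\tstheta_k$ and $\hstheta_k$ (a union bound, which explains the factor $4/\delta$) or for gradients via the $K_{\varTheta}$-Lipschitzness. I then chain the steps:
\begin{itemize}
\item optimality of $\hstheta_k$ for $\hL_k$ gives $\tL_k(\hstheta_k) - \tL_k(\tstheta_k) \leq 2\sup|\tL_k - \hL_k/Z_{\min}|$;
\item quadratic growth from PL then gives $\norm{\hstheta_k - \tstheta_k}^2 \leq \frac{4}{\mu}\sup|\cdot|$.
\end{itemize}
Requiring the resulting bound to be at most $\eps/(4K_{\varTheta}^2)$ and solving for $N_h$ gives a requirement of the form $N_h \gtrsim \mu^{2}\gamma^2K_{\varTheta}^2C_{\score}^2\log(4/\delta)/\eps^2$. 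I do not expect to reproduce the exact constant 288 or the placement of $\mu$, only this form.

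The main obstacle is the concentration step at the data-dependent point $\hstheta_k$. A pointwise Hoeffding bound does not directly apply there, since $\hstheta_k$ depends on the sample. My first attempt is to control the loss difference via Lipschitzness in $\theta$. The per-sample loss is Lipschitz in $\theta$ with constant about $2K_{\varTheta}\sqrt{C_{\score}}$, so I can combine the PL growth with a self-bounding argument, or fall back on a covering argument over the bounded set $\varTheta$ that is absorbed into the $\widetilde O$. A secondary subtlety is that $\tL_k$ uses the true $\pi_h$ while the samples come from the learned $\pi_{h-1}$. I will take the rewritten form in \Cref{thm:4-1-lemma_weights} as the population target, so the reweighted samples are unbiased for it.
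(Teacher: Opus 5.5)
Your skeleton matches the paper's, step for step:
\begin{itemize}
\item the decomposition $E_k(\xi) \le 2\,\mathbb{E}\norm{\hssc_k - \tssc_k}^2 + 2\eps_{\F}$;
\item Lipschitzness in $\theta$, followed by quadratic growth from \Cref{assum:4-D-diffusion-PL_loss};
\item the ERM-optimality step, which reduces everything to deviations between the population and empirical losses at $\hstheta_k$ and $\tstheta_k$;
\item the bound $\gamma C_{\score}$ on the summands, and a two-point union bound giving $\log(4/\delta)$.
\end{itemize}
The paper closes the concentration step by applying its Rademacher lemma to the two-element class $\S = \{\hssc_k, \tssc_k\}$. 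It then bounds $R(\S) \le \gamma C_{\score}\sqrt{\log 4 / N_h}$ by Massart's lemma. That is how it obtains a threshold with no capacity term and the constant $288 = 2\cdot 12^2$. You are right that this cannot be done pointwise. $\hssc_k$ is computed from the same sample $\D_h$, so $\S$ is a data-dependent class, and a bound valid for a fixed class does not apply to it. The paper's step is exactly the one you reject.

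However, your proposal does not close this gap either, so it does not establish the statement.
\begin{itemize}
\item \textbf{Covering fallback.} It needs $\varTheta$ to be a bounded subset of a finite-dimensional space, and \Cref{assum:4-D-diffusion-hypothesis_class} gives neither. Quadratic growth plus boundedness of the population loss does confine $\varTheta$ to a bounded neighbourhood of the minimizer set, but finite dimension is not assumed. Even granting it, covering adds a term of order $d_\theta\log(\cdot/\eps)$ to $N_h$. You would then prove a different, weaker threshold, not the stated one with $\log(4/\delta)$ alone.
\item \textbf{``Self-bounding'' route.} It is unspecified. PL of the population loss alone does not control where the empirical minimizer $\hstheta_k$ lands.
\item \textbf{Gradient variant.} It needs the same uniform control at the data-dependent point $\hstheta_k$.
\end{itemize}

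Two further steps fail as written, in the same way as in the paper.
\begin{itemize}
\item \textbf{Bounded summands.} Hoeffding needs them, but the target $\ssc_k = -\omega/\sigma_k$ is Gaussian. Your bound $2C_x/(1-e^{-2t_0})$ uses only $z_0 \in \mathbb{B}(0,C_x)$ and would also require $\norm{z_k} \le C_x$, which is false. You need to truncate $\omega$ or use a Bernstein-type inequality. This replaces the $C_x$-term of $C_{\score}$ by one of order $(d_x + \log(N_h/\delta))/(1-e^{-2t_0})$.
\item \textbf{Constants.} With the correct constants ($2/\mu$ from quadratic growth, $K_{\varTheta}^2$ from squaring the Lipschitz bound), your own chain gives $N_h \gtrsim \gamma^2 K_{\varTheta}^4 C_{\score}^2 \log(4/\delta)/(\mu^2\eps^2)$. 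That is not the form $\mu^2\gamma^2K_{\varTheta}^2C_{\score}^2\log(4/\delta)/\eps^2$ you quote. The stated dependence arises only from the paper's inequalities $\norm{\hssc_k - \tssc_k}^2 \le K_{\varTheta}\norm{\hstheta_k - \tstheta_k}^2 \le \mu K_{\varTheta}(\tL_k(\hssc_k) - \tL_k(\tssc_k))$.
\end{itemize}
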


\begin{proof}
  Based on the relationship above, We first relate $\norm{\hssc_k(x; \xi) - \tssc_k(x; \xi)}^2$ to the difference in losses. Indeed, by Assumptions \ref{assum:4-D-diffusion-hypothesis_class} and \ref{assum:4-D-diffusion-PL_loss}, we have
  \begin{align*}
    \norm[\big]{\hssc_k(x; \xi) - \tssc_k(x; \xi)}^2
    &\leq K_{\varTheta} \norm[\big]{\hstheta_k - \tstheta_k}^2 \\
    &\leq \mu K_{\varTheta} \prn[\big]{\tL_k(\hssc_k) - \tL_k(\tssc_k)},
  \end{align*}
  where we use the quadratic growth property of PL functions. Therefore, it only suffices to bound $\tL_k(\hssc_k) - \tL_k(\tssc_k)$, a statistical error term. For this purpose, note that we have the canonical loss decomposition
  \begin{align*}
    \tL_k(\hssc_k) - \tL_k(\tssc_k)
    &\leq \prn[\big]{ \tL_k(\hssc_k) - \hL_k(\hssc_k) } + \prn[\big]{ \hL_k(\hssc_k) - \hL_k(\tssc_k) } \\
    &\hspace{4em} {}+ \prn[\big]{ \hL_k(\tssc_k) - \tL_k(\tssc_k) } \\
    &\leq \prn[\big]{ \tL_k(\hssc_k) - \hL_k(\hssc_k) } + \prn[\big]{ \hL_k(\tssc_k) - \tL_k(\tssc_k) } \\
    &\leq \sum_{s \in \S} \abs[\big]{ \tL_k(s) - \hL_k(s) },
  \end{align*}
  where $\S := \set{\hssc_k, \tssc_k}$. For the upper bound, we would like to apply the well-known bound involving Rademacher complexity (\Cref{thm:lemma-Rademacher_property}). For this purpose, we first have to show that the individual losses are uniformly bounded. Note that
  \begin{equation*}
     \norm[\big]{s(x; \xi) - \ssc_k(x; \xi)}^2
    \leq 2 \norm[\big]{s(x; \xi)}^2 + 2 \norm[\big]{\ssc_k(x; \xi)}^2,~ \forall s \in \S,
  \end{equation*}
  where we apply $\norm{a+b}^2 \leq 2\norm{a}^2 + 2\norm{b}^2$ again. Now the first term is directly bounded by \Cref{assum:4-D-diffusion-hypothesis_class}, while the second term can be directly computed as
  \begin{align*}
    \norm[\big]{\ssc_k(x; \xi)}
    &= \norm*{\frac{x_k - \exp(-t_k) x_0}{\sigma_k^2}}
    \leq \frac{\norm{x_k} + \norm{x_0}}{\sigma_k^2} \\
    &\leq \frac{2C_x}{1 - \exp(-2t_0)}
  \end{align*}
  by Assumption \ref{assum:2-1-minimizers}. As a conclusion, for any $s \in \S$,
  \begin{equation*}
    \norm[\big]{s(x; \xi) - \ssc_k(x; \xi)}^2
    \leq 2 C_{\F}^2 + 2 \prn*{\frac{2C_x}{1 - \exp(-2t_0)}}^2
    =: C_{\score},
  \end{equation*}
  and consequently $g(\xi) \cdot \norm{s(x; \xi) - \ssc_k(x; \xi)}^2 \leq \gamma C_{\score}$. Then by \Cref{thm:lemma-Rademacher_property} we have, with probability at least $1-\delta/2$,
  \begin{equation*}
    \abs[\big]{ \tL_k(s) - \hL_k(s) } \leq 2R(\S) + \gamma C_{\score} \sqrt{\frac{2 \log(4/\delta)}{N_k}},~ \forall s \in \S,
  \end{equation*}
  where $R(\S)$ is the Rademacher complexity of the finite hypothesis class $\S$. It is well-known that (see \Cref{thm:lemma-Rademacher_finite_class}), the Rademacher complexity of a bounded finite class $\S$ has a natural upper bound
  \begin{equation*}
    R(\S) \leq \gamma C_{\score} \sqrt{\frac{\log 4}{N_k}} < \gamma C_{\score} \sqrt{\frac{2 \log(4/\delta)}{N_k}}.
  \end{equation*}
  Therefore, with probability at least $1-\delta/2$, we have
  \begin{equation*}
    \abs[\big]{ \tL_k(s) - \hL_k(s) } \leq 3 \gamma C_{\score} \sqrt{\frac{2 \log(4/\delta)}{N_k}},~ \forall s \in \S,
  \end{equation*}
  and the overall bound now becomes
  \begin{align*}
    \norm[\big]{\hssc_k(x; \xi) - \tssc_k(x; \xi)}^2
    \leq 6 \mu \gamma K_{\varTheta} C_{\score} \sqrt{\frac{2 \log(4/\delta)}{N_k}},
  \end{align*}
  and consequently,
  \begin{equation*}
    E_k(\xi) \leq 12 \mu \gamma K_{\varTheta} C_{\score} \sqrt{\frac{2 \log(4/\delta)}{N_k}} + 2\eps_{\F}.
  \end{equation*}
  In other words, if we take
  \begin{equation*}
    N_h > \frac{288 \mu^2 \gamma^2 K_{\varTheta}^2 C_{\score}^2 \log(4/\delta)}{\eps^2},
  \end{equation*}
  using union bound we can guarantee that,
  \begin{equation*}
    E_k(\xi) < \eps + 2\eps_{\F}
  \end{equation*}
  with probability at least $1-\delta$. This completes the proof.
\end{proof}

\begin{remark}
  We point that here the product $\mu \gamma$ could be regarded as a constant. Specifically, if we scale up $\gamma$ by multiplying the loss by a constant, then $\mu$ will have to be shrunk by the same multiple. This is exactly what happens when we consider $\tL_k$ instead of $\L_k$ in the analysis.
\end{remark}

\subsubsection{The Main Theorem}

At this point, we are ready to prove the main theorem (\Cref{thm:main_complexity-D_IMPL}) regarding the sample complexity of the D-IMPL algorithm.

\begin{proof}[Proof of \Cref{thm:main_complexity-D_IMPL}]
  Combining \Cref{thm:4-D-proof-lemma_dTV} and \Cref{thm:4-D-proof-lemma_Exi}, we have, with probability at least $1-\delta$,
  \begin{align*}
    \dTV\prn[\big]{ q_0(\cdot | \xi), \hq_0(\cdot | \xi) } &\leq O\prn[\big]{\sqrt{t_0} \log(1/t_0)} + O\prn[\big]{1/\sqrt{K}} \\
    &{}+ O\prn[\big]{\exp(-T)} + \eps_{\F} + \frac{\sqrt{T}}{2} (\eps + 2\eps_{\F}).
  \end{align*}
  Now we shall take
  \begin{equation*}
    t_0 = O(\eps),\quad
    K = O\prn[\big]{ 1 / \eps^2 },\quad
    T = O\prn[\big]{ \log(1/\eps) },
  \end{equation*}
  to ensure
  \begin{equation*}
    \dTV\prn[\big]{ q_0(\cdot | \xi), \hq_0(\cdot | \xi) } \leq \widetilde{O}(\eps + \eps_{\F})
  \end{equation*}
  in this high-probability event. This completes the proof.
\end{proof}

\subsection{Technical Lemmas}

\begin{lemma}[Rademacher bound]\label{thm:lemma-Rademacher_property}
  Given a hypothesis class $\H = \set{h}$, a data distribution $x \sim p$, and a loss function $\ell(h, x)$ that is uniformly bounded by $\abs{\ell(h, x)} \leq c$, consider the expected and empirical losses $\L(h) = \E[x \sim p]{\ell(h, x)}$ and $\hL(h) = \tfrac{1}{N} \textstyle\sum_{i=1}^{N} \ell(h, x_i)$. Then, with probability at least $1-\delta$, the error bound
  \begin{equation*}
    \L(h) - \hL(h) \leq 2R_N(\H) + c \sqrt{\frac{\log(2/\delta)}{N}}
  \end{equation*}
  holds for all $h \in \H$, and specifically, for the ERM estimator $\hat{h}^{\star} = \min_{h \in \H} \hL(h)$, where $R(\H)$ denotes the \emph{empirical Rademacher complexity} of class $\H$. 
\end{lemma}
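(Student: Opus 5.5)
The plan is the standard symmetrization-plus-concentration argument, arranged so that the deviation term comes out as $c\sqrt{\log(2/\delta)/N}$ and the complexity term is the empirical Rademacher complexity $R_N(\H)$ of the loss class $\set{x \mapsto \ell(h,x) \mid h \in \H}$ on the sample $x_{1:N}$. Set
\begin{equation*}
  \Phi(x_{1:N}) := \sup_{h \in \H} \prn[\big]{ \L(h) - \hL(h) }.
\end{equation*}
A bound on $\Phi$ is uniform over $\H$, so it covers every $h\in\H$ at once. In particular it covers the ERM estimator $\hat{h}^{\star}$, which is data-dependent but still lies in $\H$. That gives the ``specifically'' clause for free.

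\textbf{Step 1 (symmetrization).} Introduce a ghost sample $x'_{1:N}$ and i.i.d.\ signs $\sigma_{1:N}$. The usual Jensen and sign-flip argument gives
\begin{equation*}
  \E{\Phi} \leq 2\,\E{R_N(\H)},
\end{equation*}
where $R_N(\H) = \E[\sigma]{\sup_h \tfrac{1}{N}\sum_i \sigma_i \ell(h,x_i)}$ is the empirical complexity, conditional on the sample.

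\textbf{Step 2 (one combined McDiarmid).} To avoid paying the concentration price twice, I will not concentrate $\Phi$ and $R_N(\H)$ separately. Instead I apply McDiarmid once, to
\begin{equation*}
  G := \Phi - 2R_N(\H).
\end{equation*}
By Step 1, $\E{G}\leq 0$. Replacing a single $x_i$ changes $\hL(h)$ by at most $\Delta/N$ uniformly in $h$, and changes $R_N(\H)$ by at most $\Delta/N$ as well. Here $\Delta$ is the oscillation of $\ell$. So $G$ has bounded differences with constant $3\Delta/N$. One-sided McDiarmid at level $\delta$ (splitting as $\delta/2$ to match the $\log(2/\delta)$ form) then yields
\begin{equation*}
  \Phi \leq 2R_N(\H) + 3\Delta\sqrt{\frac{\log(2/\delta)}{2N}} \quad \text{with probability at least } 1-\delta.
\end{equation*}

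\textbf{Step 3 (matching the constant).} This is the main obstacle: turning the deviation term above into exactly $c\sqrt{\log(2/\delta)/N}$.

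With $\abs{\ell}\leq c$ one has $\Delta = 2c$. The deviation term is then $6c\sqrt{\log(2/\delta)/(2N)} = 3\sqrt{2}\,c\sqrt{\log(2/\delta)/N}$, which is larger than the stated term. The oscillation can be cut in half by using that every loss to which the lemma is applied in the paper is nonnegative. These losses are weighted squared norms bounded by $\gamma C_{\score}$, so $\ell\in[0,c]$ and $\Delta=c$. Even then, Step 2 gives $3c\sqrt{\log(2/\delta)/(2N)} = (3/\sqrt{2})\,c\sqrt{\log(2/\delta)/N}$, which is still above $c\sqrt{\log(2/\delta)/N}$.

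To try to shave the remaining factor, I would first tighten the bounded-difference constant of $G$. The hope is that the change of $\sup_h(\L-\hL)$ and the change of $2R_N(\H)$ under replacing $x_i$ partially cancel, because both are driven by the same coordinate through the same loss values. If that cancellation is available, the combined constant could be brought toward $2c/N$.

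If cancellation fails, the fallback is to state the lemma with the expected complexity $\E{R_N(\H)}$ in place of the empirical one and with $\ell\in[0,c]$. Concentrating $\Phi$ alone then gives $c\sqrt{\log(1/\delta)/(2N)}\leq c\sqrt{\log(2/\delta)/N}$, which matches the stated deviation term. However, the fallback changes the statement itself: it is the expected complexity version, not the empirical one written in the lemma. I am not confident that the constant $c\sqrt{\log(2/\delta)/N}$ together with the empirical $R_N(\H)$ holds as stated.

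For how the lemma is used in \Cref{thm:4-D-proof-lemma_Exi}, the constant is harmless. There it only enters through $3\gamma C_{\score}\sqrt{2\log(4/\delta)/N}$ and is absorbed into the $O(\cdot)$.
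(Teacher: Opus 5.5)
\textbf{Verdict: the statement is false as written, so the gap in your Step 3 cannot be closed.} Your Steps 1--2 and the ERM clause are correct.

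\textbf{Counterexample.} The lemma fails, with $|\ell|\le c$ and the empirical $R_N$, already for a one-element class. Take $\H=\{h\}$, $x_i$ i.i.d.\ uniform on $\{-1,+1\}$, and $\ell(h,x)=cx$. Then $|\ell|\le c$ and $\L(h)=0$. In the convention of the cited source (no absolute value inside the supremum), $R_N(\H)=\mathbb{E}_\sigma\bigl[\tfrac{c}{N}\sum_i\sigma_i x_i\bigr]=0$. The claimed event therefore becomes $-N^{-1/2}\sum_i x_i\le\sqrt{\log(2/\delta)}$. By the CLT, as $N\to\infty$ its failure probability tends to $\mathbb{P}\bigl(Z>\sqrt{\log(2/\delta)}\bigr)$ with $Z\sim\mathcal{N}(0,1)$. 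This is of order $\sqrt{\delta/\log(1/\delta)}$ and exceeds $\delta$ for every $\delta\le 10^{-2}$; at $\delta=10^{-3}$ it is about $2.9\times10^{-3}$.

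Because $R_N\equiv 0$ here, the bounded-difference cancellation you hoped for has nothing to act on. The $\Phi$ term alone overshoots, and Hoeffding's $c\sqrt{2\log(1/\delta)/N}$ is the right size. Restricting to $\ell\in[0,c]$, as you suggest and as holds downstream, removes this example. However, nothing in the cited source gives the constant $c\sqrt{\log(2/\delta)/N}$ together with the empirical $R_N$ in that case either, and there is no reason to chase it.

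\textbf{What the paper actually cites.} The paper's proof is only a pointer to Theorem 26.5 of \cite{shalev2014understanding}, and that theorem does not state this lemma. Its first part uses the expected complexity: $2\,\mathbb{E}_{S'}R(\ell\circ\H\circ S')+c\sqrt{2\log(2/\delta)/N}$. Its second part uses the empirical one: $2R_N(\H)+4c\sqrt{2\log(4/\delta)/N}$. The lemma pairs the complexity term of the second part with the deviation term of the first, and also drops a factor $\sqrt{2}$.

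\textbf{What you have proved.} Your single McDiarmid on $G=\Phi-2R_N$ (differences at most $3\Delta/N$, $\mathbb{E}G\le 0$) is a valid proof of the second part with a better constant, $3c\sqrt{2\log(1/\delta)/N}$ for $|\ell|\le c$. No $\delta/2$ split is needed for this one-sided bound. Your fallback is the first part, sharpened for $\ell\in[0,c]$. Either should replace the stated lemma.

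As you note, nothing downstream changes. In the proof of \Cref{thm:4-D-proof-lemma_Exi} the bound is actually invoked with deviation term $\sqrt{2\log(4/\delta)/N}$, i.e.\ the first part's form rather than the lemma's. Only its order enters the $\widetilde{O}$ sample complexity.
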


\begin{proof}
  See, e.g., Theorem 26.5 in \cite{shalev2014understanding}.
\end{proof}

\begin{lemma}[Rademacher complexity of a finite set]\label{thm:lemma-Rademacher_finite_class}
  For a finite class $\H = \set{h_k \mid k \in [m]}$ where all candidates are uniformly bounded by $b$ (i.e., $\norm{h_k}_{\infty} \leq b$), its empirical Rademacher complexity with respect to data $\set{x_i \mid i \in [N]}$ is bounded by
  \begin{equation*}
    R_N(\H) \leq b \sqrt{\frac{\log(2m)}{N}}.
  \end{equation*}
\end{lemma}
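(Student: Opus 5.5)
The statement to prove is \Cref{thm:lemma-Rademacher_finite_class}, the Rademacher complexity bound for a finite class $\H=\set{h_k \mid k\in[m]}$. I would use the standard Massart finite-class argument: exponentiate the supremum, replace it by a sum over hypotheses, and then optimize a free parameter.

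Fix the data $\set{x_i}_{i=1}^N$ and let $\sigma_1,\dots,\sigma_N$ be i.i.d.\ Rademacher signs. Write the empirical Rademacher complexity as $R_N(\H)=\E[\sigma]{\max_{k\in[m]} \tfrac1N\sum_{i=1}^N \sigma_i h_k(x_i)}$. Symmetrize the class by adding the negated hypotheses $-h_k$, which gives $2m$ functions. This bounds the complexity even if the absolute-value convention is used, and it is where the $\log(2m)$ comes from.

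For any $t>0$, apply Jensen's inequality to the convex map $u\mapsto e^{tu}$, and then bound the maximum by the sum:
\begin{equation*}
  \exp\prn*{t\,N R_N(\H)} \le \E[\sigma]{\max_{k,\pm} e^{\pm t\sum_i \sigma_i h_k(x_i)}} \le \sum_{k,\pm}\prod_{i=1}^N \E[\sigma_i]{e^{\pm t\sigma_i h_k(x_i)}}.
\end{equation*}
Each factor is $\cosh(t h_k(x_i))\le e^{t^2 h_k(x_i)^2/2}\le e^{t^2b^2/2}$, by Hoeffding's lemma together with $\norm{h_k}_\infty\le b$. Combining these, $N R_N(\H)\le \frac{\log(2m)}{t}+\frac{tNb^2}{2}$.

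Now choose $t=\sqrt{2\log(2m)/(Nb^2)}$. This gives $N R_N(\H)\le b\sqrt{2N\log(2m)}$, that is, $R_N(\H)\le b\sqrt{2\log(2m)/N}$.

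The main obstacle is the constant. The Massart argument naturally yields a factor $\sqrt2$ that the stated bound lacks. I would first check the normalization convention of the cited source, Theorem 26.5 and Lemma 26.8 in \cite{shalev2014understanding}. If the $\sqrt2$ cannot be removed under the paper's normalization, I would note the bound as $b\sqrt{2\log(2m)/N}$. The later use in \Cref{thm:4-D-proof-lemma_Exi} absorbs constants into $\widetilde{O}$, so the extra factor does not affect the argument downstream.
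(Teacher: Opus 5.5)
Your argument is the standard Massart exponential-moment proof, which is what the paper relies on: it simply cites Lemma 26.8 of \cite{shalev2014understanding}. Each step is correct (symmetrizing to $2m$ functions, Jensen, bounding the maximum by the sum, $\cosh u \le e^{u^2/2}$, and optimizing $t$), and it gives $R_N(\H) \le b\sqrt{2\log(2m)/N}$.

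You can drop the hedge about the constant. The $\sqrt{2}$ cannot be removed, so the lemma as printed is false and your version is the correct one. The cited source uses the same $\frac{1}{N}\mathbb{E}_\sigma[\sup \sum_i \sigma_i a_i]$ normalization, and its Massart bound carries $\sqrt{2\log|A|}$.

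Here is an explicit counterexample. Take $N=3$ distinct points, and let $\H$ be the $m=8$ functions realizing every sign pattern in $\{-b,b\}^3$. For each $\sigma$, pick $h$ with $h(x_i)=b\sigma_i$. Then $R_N(\H)=b$, with or without absolute values, whereas $b\sqrt{\log(16)/3}\approx 0.96\,b$. The same construction with all $2^N$ patterns violates the stated bound for every $N\ge 3$.

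Nothing breaks downstream. In \Cref{thm:4-D-proof-lemma_Exi} the class has $m=2$. The corrected bound $\gamma C_{\score}\sqrt{2\log 4/N_k}$ is still strictly below $\gamma C_{\score}\sqrt{2\log(4/\delta)/N_k}$ for $\delta\in(0,1)$. So that chain of inequalities, including the constant $288$, goes through verbatim, and you do not need to absorb anything into $\widetilde{O}$.
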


\begin{proof}
  This is a well-known corollary of Massart's Lemma (see, e.g., Lemma 26.8 in \cite{shalev2014understanding}).
\end{proof}

\subsection{Numerical Simulation (continued)}

In this section, we supplement some details, as well as additional results, of the numerical simulations.

\textbf{Setup.} We implement a 30-step DDPM with Ornstein-Unlenbeck denoising schedule. The score function is implemented by a 3-layer neural network with ReLU activation.

\textbf{Objectives.} To demonstrate the capability of the minimization policies learned by D-IMPL to capture complex, multi-modal minimizer sets, we test the D-IMPL algorithm in different objective functions, as summarized in \Cref{tab:5-functions-multi_modal}.

\vspace{-2mm}
\begin{table}[H]
    \centering
    \begin{tabular}{c|c|c}
      \specialrule{1.5pt}{0pt}{0pt}
      \textbf{Name} & \textbf{Function} & \textbf{Minimizers} \\
      \specialrule{0.4pt}{0pt}{0pt}
      \texttt{Himm} & $(x_1^2 + x_2 - \xi_1)^2 + (x_1 + x_2^2 - \xi_2)^2$ & various \\\hline
      \texttt{trig} & $\sin\prn[\big]{ (x_1 - \xi_1) \pi} + \cos\prn[\big]{ (x_2 - \xi_2) \pi}$ & countable \\\hline
      \texttt{quad-clp} & $\max\brac[\big]{(x_1-\xi_1)^2 + (x_2-\xi_2)^2, 0}$ & 1-d manifold \\\hline
      \texttt{quad-lin} & $(x_1 \sin \xi_1 + x_2 \cos \xi_2) ^2$ & 2-d manifold \\
      \specialrule{1.5pt}{0pt}{0pt}
    \end{tabular}
    \caption{Functions with multi-modal landscapes.}\label{tab:5-functions-multi_modal}
\end{table}
\vspace{-4mm}

\textbf{Convergence.} In addition, we showcase the convergence behavior of the D-IMPL algorithm with respect to \texttt{quad-lin} in \Cref{fig:5-convergence_curve}. It is evident that the test accuracy curve shows a clear pattern of ``stair cases'' that eventually approaches $100\%$, which is consistent with our iterative update scheme. 

\vspace{-2mm}
\begin{figure}[H]
  \centering
  \includegraphics[width=0.75\linewidth]{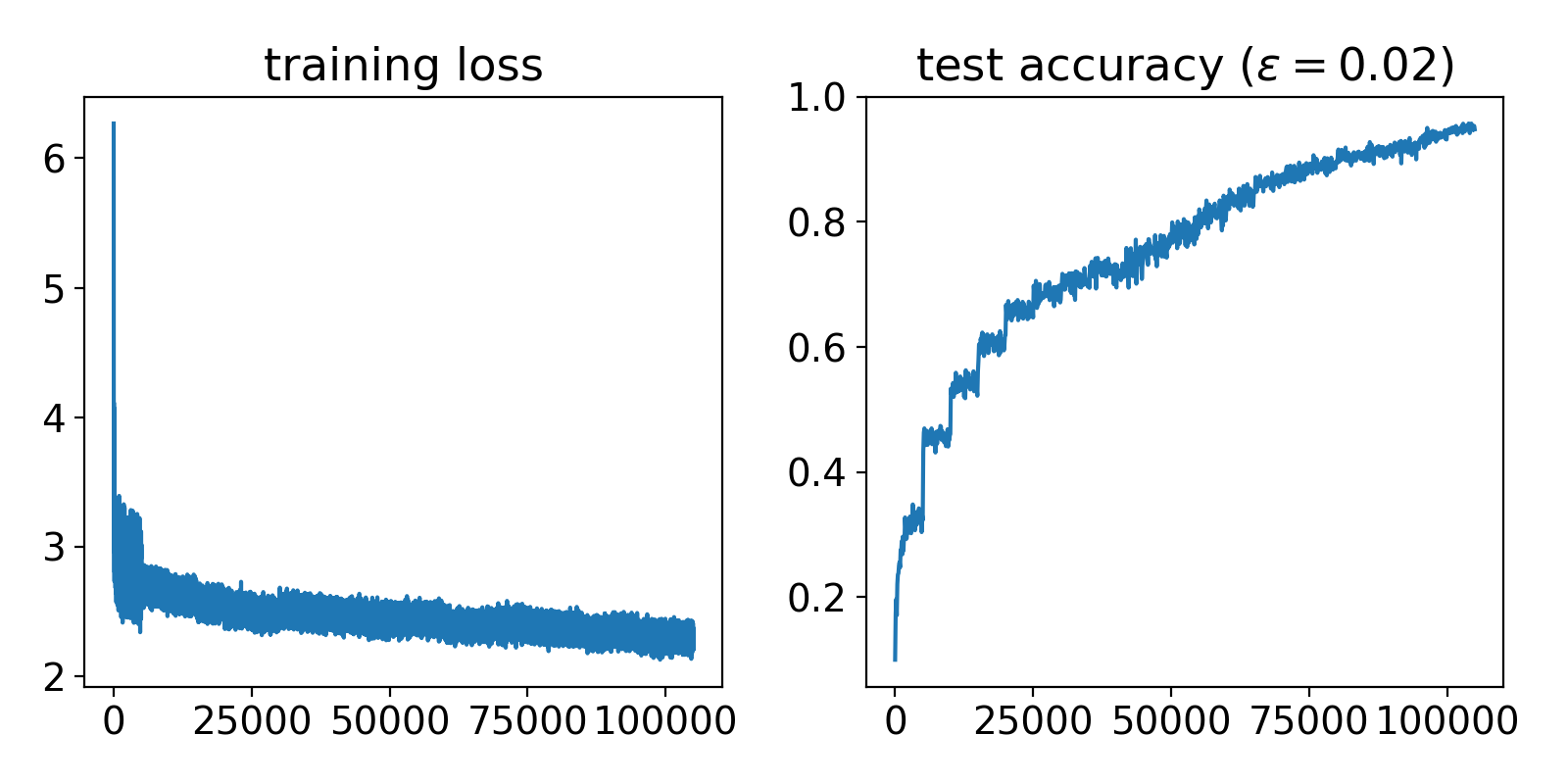}

  \vspace{-2mm}
  \caption{Convergence of D-IMPL for \texttt{quad-lin}.}\label{fig:5-convergence_curve}
\end{figure}

\vspace{-6mm}
\begin{figure}[H]
  \centering
  \includegraphics[width=\linewidth]{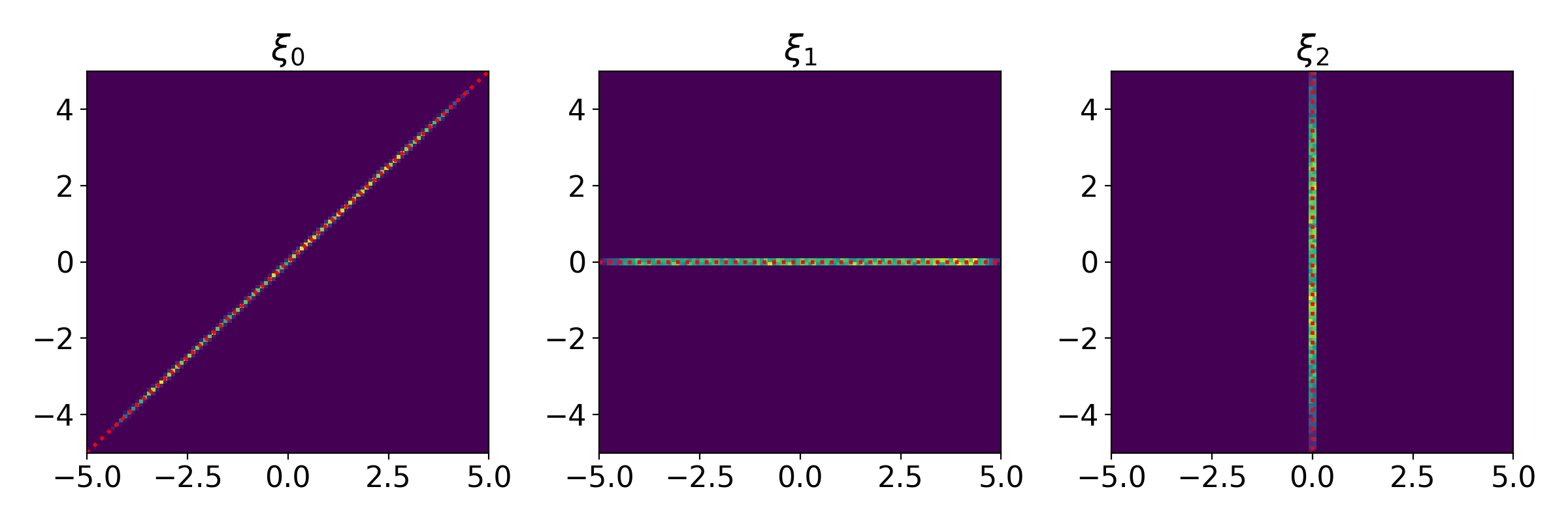}

  \vspace{-2mm}
  \caption{Learned policies for \texttt{quad-lin} (dotted line: ground truth; \textcolor{Purple}{color}\textcolor{Emerald}{-}\textcolor{Gold1}{scale}: learned).}
\end{figure}

\subsection{Benefits of Diffusion-Based Optimizers}

In this section, as an extension to \Cref{sec:settings-formulation}, we further discuss the benefits of these design objectives in details.

\textbf{Benefit 1: Requiring only black-box information.} It is assumed that, when learning the universal optimizer, we only have access to the objective via function value queries; in other words, the objective function is given as a black box, without exposing its internal structure. The call for black-box methods roots deeply in the limited level of access to objective functions in real-world settings. Indeed, gradient-based methods, as the most prominent and most widely-used methods nowadays, generally require direct access to the explicit form of the objective function, which may not be readily available due to the complexity of problems (for example, the objective function may not be known in closed forms, but rather, can only be approximated by simulation so that gradient computation is infeasible). The above challenge motivates the study of black-box optimization (BBO) methods, examples of which include Bayesian optimization methods \cite{wang2023recent, ren2025tsrsr}, zeroth-order optimization methods \cite{chen2022improve, ren2023escaping}, and other derivative-free search methods \cite{larson2019derivative}. Our setting inherits the idea of black-box objective access, and further generalizes it to a parameterized family of structurally similar instances.

\textbf{Benefit 2: Universal learning-based optimizer.} In our setting, instead of solving single optimization instances one at a time, we aim at learning a \emph{universal} optimizer that can simultaneously solve a parameterized family of optimizations. Here by ``simultaneously'' we mean the minimization policy produced at the end of the learning stage is able to solve all possible instances met in the inference stage. In real-world engineering practices, a wide range of tasks require solving a collection of similar optimization problems that are readily parameterized. To name just a few of them:
\begin{itemize}
  \item for resource allocation problems (e.g., optimal power flow problem discussed in \cite{hoseinpour2025diffopf}), we may view $\xi$ as the demand, $x$ as the supply, and $f(x; \xi)$ as the allocation cost;
  \item for receding horizon control problems, we may view $\xi$ as the current state, $x$ as the receding-horizon control sequence, and $f(x; \xi)$ as the control cost;
  \item for reinforcement learning (RL), we may view $\xi$ as the current state, $x$ as the current action, and $f(x; \xi)$ as the Q-function (i.e., cost-to-go function).
\end{itemize}
This is in stark contrast to classical BBO methods \cite{wang2023recent, ren2025tsrsr, chen2022improve, ren2023escaping, larson2019derivative}, which solves for a single minimizer on a per-instance basis. Therefore, even faced with a structurally similar optimization instance with slightly different parameters, the solver still needs to start optimizing from scratch, which could be infeasible since both the algorithm and the objective query could be expensive, time-consuming and/or capped by a limit. As a result, we are in desperate need of a general-purpose method to learn a universal optimizer for a parametric family of instances, such that the learned optimizer can solve a new problem without further querying the unseen objective function at the inference stage.

\textbf{Benefit 3: Capturing multi-modal landscapes.}  As discussed in the introduction, the proposed new solution concept of minimization policies automatically enables the resulting optimizer to capture complex optimization landscapes where the minimizer set may be multi-modal, continuous, or even forming a lower-dimensional manifold. This is remarkably different from the existing BBO methods that only output \emph{one} minimizer for each optimization problem instance, which may be insufficient for highly non-convex optimization landscapes with multiple or even a continuum of minimizers.
\end{document}